\documentclass{ecai} 

\usepackage{latexsym}
\usepackage{amssymb}
\usepackage{amsmath}
\usepackage{amsfonts}
\usepackage{amsthm}
\usepackage{booktabs}
\usepackage{enumitem}
\usepackage{graphicx}
\usepackage{color}

\usepackage{caption}
\newtheorem{theorem}{Theorem}

\newtheorem{corollary}[theorem]{Corollary}
\newtheorem{proposition}[theorem]{Proposition}

\newtheorem{definition}{Definition}

\newcommand{\BibTeX}{B\kern-.05em{\sc i\kern-.025em b}\kern-.08em\TeX}

\newcommand{\ERF}{\mathsf{ERF}}
\newcommand{\ERL}{\mathsf{ERL}}
\newcommand{\Acts}{\mathsf{A}}
\newcommand{\FER}{\mathsf{FER}}
\newcommand{\Ante}{\mathsf{Ante}}
\newcommand{\Cons}{\mathsf{Cons}}
\newcommand{\CF}{\mathsf{CF}}
\usepackage{listings}
\usepackage{xurl}
\usepackage{hyperref}
\usepackage{breqn}
\usepackage{ragged2e}
\usepackage[htt]{hyphenat}

\usepackage{tikz}
\usetikzlibrary{shapes, positioning, arrows.meta, fit, backgrounds}

\usepackage{algorithm}
\usepackage{algpseudocode}
\usepackage{array}
\makeatletter
\newcommand{\thickhline}{%
    \noalign {\ifnum 0=`}\fi \hrule height 1.5pt
    \futurelet \reserved@a \@xhline
}
\newcolumntype{"}{@{\hskip\tabcolsep\vrule width 1.5pt\hskip\tabcolsep}}
\newcommand{\thickhlineSecond}{%
    \noalign {\ifnum 0=`}\fi \hrule height 1pt
    \futurelet \reserved@a \@xhline
}

\definecolor{codegreen}{rgb}{0,0.6,0}
\definecolor{codegray}{rgb}{0.5,0.5,0.5}
\definecolor{codepurple}{rgb}{0.58,0,0.82}
\definecolor{backcolour}{rgb}{0.95,0.95,0.92}

\lstdefinestyle{mystyle}{
    backgroundcolor=\color{backcolour},   
    commentstyle=\color{codegreen},
    keywordstyle=\color{magenta},
    numberstyle=\tiny\color{codegray},
    stringstyle=\color{codepurple},
    basicstyle=\ttfamily\footnotesize,
    breakatwhitespace=false,         
    breaklines=true,                 
    keepspaces=true,                 
    numbers=left,                    
    numbersep=5pt,                  
    showspaces=false,                
    showstringspaces=false,
    showtabs=false,                  
    tabsize=2,
    xleftmargin=1em,
}

\begin{document}


\begin{frontmatter}


\paperid{123} 


\title{\textit{fEDM+}: A Risk-Based Fuzzy Ethical Decision Making Framework with Principle-Level Explainability and Pluralistic Validation}


\author[A]{\fnms{Abeer}~\snm{Dyoub}\orcid{0000-0003-0329-2419}\thanks{Corresponding Author. Email: abeer.dyoub@uniba.it.}}
\author[A,B]{\fnms{Francesca A.}~\snm{Lisi}\orcid{0000-0001-5414-5844}}

\address[A]{{Dept. of Informatics, University of Bari ``Aldo Moro'', Bari, Italy}}
\address[B]{Centro Interdipartimentale di Logica e Applicazioni (CILA), University of Bari ``Aldo Moro'', Bari, Italy}


\begin{abstract}
In a previous work, we introduced the fuzzy Ethical Decision-Making framework (\textit{fEDM}), a risk-based ethical reasoning architecture grounded in fuzzy logic. The original model combined a fuzzy Ethical Risk Assessment module (\textit{fERA}) with ethical decision rules, enabled formal structural verification through Fuzzy Petri Nets (FPNs), and validated outputs against a single normative referent. Although this approach ensured formal soundness and decision consistency, it did not fully address two critical challenges: principled explainability of decisions and robustness under ethical pluralism.

In this paper, we extend fEDM in two major directions. First, we introduce an Explainability and Traceability Module (ETM) that explicitly links each ethical decision rule to the underlying moral principles and computes a weighted principle-contribution profile for every recommended action. This enables transparent, auditable explanations that expose not only what decision was made but why, and on the basis of which principles. Second, we replace single-referent validation with a pluralistic semantic validation framework that evaluates decisions against multiple stakeholder referents, each encoding distinct principle priorities and risk tolerances. This shift allows principled disagreement to be formally represented rather than suppressed, thus increasing robustness and contextual sensitivity.

The resulting extended fEDM, called \textit{fEDM+}, preserves formal verifiability while achieving enhanced interpretability and stakeholder-aware validation, making it suitable as an oversight and governance layer for ethically sensitive AI systems.
\end{abstract}

\end{frontmatter}


\section{Introduction}
\label{intro}

Artificial agents are increasingly deployed in domains where their actions carry morally significant consequences, including healthcare, assistive robotics, and autonomous decision making. In such contexts, ethical decision-making mechanisms must satisfy two fundamental requirements: (1) structural soundness and formal verifiability, and (2) ethical intelligibility for human stakeholders. AI system that produces normatively acceptable outcomes but cannot justify them in principle-based terms remains difficult to audit and socially contestable. In contrast, a system that is interpretable but structurally flawed risks inconsistency, incompleteness, or hidden conflicts.

In a previous work, we proposed the fuzzy Ethical Decision-Making framework (\textit{fEDM}) \cite{DBLP:journals/corr/abs-2507-01410,DyoubandLisi2026}, a risk-based ethical reasoning framework grounded in fuzzy logic. The model integrates a fuzzy Ethical Risk Assessment module (\textit{fERA}) \cite{DBLP:conf/beware/DyoubL24}, which computes graded ethical risk from ethically relevant factors, with fuzzy ethical decision rules that generate action recommendations. To ensure structural correctness, the rule base is translated into an equivalent Fuzzy Petri Net (FPN) \cite{10.1007/3-540-56863-8_44,liu2023knowledge}, enabling formal verification of incompleteness, inconsistency, circularity, and redundancy. Validation was performed against a single normative referent, representing an agreed-upon ethical stance within the application domain.

Although \textit{fEDM} established formal rigor and operational feasibility, two limitations remained.
First, although decisions were mathematically transparent, their ethical justification remained implicit. The rule base encoded moral considerations, but there was no explicit mechanism to compute how strongly each ethical principle contributed to a given decision. As a result, explanations remained largely rule-centric rather than principle-centric, limiting normative traceability and stakeholder interpretability.
Second, validation against a single referent implicitly assumed ethical consensus. In real-world socio-technical systems, however, stakeholders often differ in their prioritization of principles (e.g., autonomy versus beneficence) and in their tolerance for ethical risk. A single gold standard risks obscuring legitimate pluralism and may incorrectly classify contextually defensible decisions as invalid.

In this work, we introduce \textit{fEDM+} which extends \textit{fEDM} to address these two limitations.
We introduce an \textit{Explainability and Traceability Module} (\textit{ETM}) that augments the existing fuzzy rule base with explicit ethical-principle annotations. Each ethical rule is linked to one or more moral principles, and when rules fire, their activation strengths and certainty factors are aggregated to compute a principle contribution vector for the recommended action. This produces principled explanations that quantify how much each ethical principle influenced the outcome. The \textit{ETM} therefore shifts interpretability from rule transparency to normative transparency.

In addition, we replace single-referent validation with a pluralistic semantic validation framework. Instead of evaluating system outputs against a single benchmark, we define a set of stakeholder referents, each encoding principle priorities, acceptable action ranges, and risk tolerance thresholds. Decisions are assessed relative to these referents, allowing principled disagreement to be formally represented and analyzed. This validation process distinguishes between structural errors (addressed via FPN verification) and normative divergence (captured through referent comparison), thereby increasing robustness and completeness.

Importantly, /these extensions do not alter the core \textit{fERA} mechanism or the FPN-based verification pipeline; rather, they build upon them. The original guarantees of structural correctness are preserved, while normative explainability and validation depth are substantially enhanced.

We demonstrate \textit{fEDM+} using a case study from the healthcare domain, introduced in our previous work. The case study illustrates: (i) principle-annotated rule execution and contribution-vector computation, (ii) generation of traceable ethical explanations, and (iii) comparative validation across multiple stakeholder referents. The results show how the extended model can surface principled trade-offs explicitly and support calibrated alignment adjustments without compromising formal soundness.

The proposed extension positions \textit{fEDM} as a formally grounded yet ethically transparent, and pluralism-aware architecture, suitable for integration as an oversight layer in advanced AI systems operating in morally sensitive domains.

\section{Problem Formulation}
\label{related}

\paragraph{Research Problem:}
\textit{How can a fuzzy-rule-based ethical decision-making framework be formally specified, verified, and semantically validated while also providing transparent traceability from each decision to the ethical principles that justify it?}

This problem requires bridging the technical rigor of fuzzy logic and formal verification with the normative expressiveness of ethical theory. Solving it entails introducing a traceability and explainability layer, to the \textit{fEDM}, that explicitly maps fuzzy decision rules to ethical principles and supports Pluralistic Semantic Validation across diverse ethical referents.

The goal is to produce an EDM model that is not only formally sound and computationally verifiable, but also ethically interpretable, allowing every decision to be explained in terms of its underlying moral rationale.


\subsection{Problem Definition}

We consider the task of \textbf{Ethical Decision Making (EDM)} for artificial agents operating in domains where their actions may have \emph{morally relevant consequences}. The goal is to design a framework that enables such agents to reason about \emph{ethical risk}, make \emph{ethically justifiable decisions}, and provide \emph{transparent explanations} that trace each decision back to the ethical principles it reflects (e.g., autonomy, beneficence, nonmaleficence, and justice according to the biomedical ethics).

Formally, an fEDM problem instance is defined as follows.

\paragraph{Input.}
\begin{enumerate}[label=(\roman*)]
    \item \textbf{Ethically Relevant Factors (ERFs):}
    A finite set of fuzzy input variables
    \[
    E = \{ e_1, e_2, \ldots, e_n \},
    \]
    where each factor \( e_i \) has a domain of crisp values and associated fuzzy sets
    (e.g., Severity = \{LOW, MEDIUM, HIGH\}).

    \item \textbf{Fuzzy Ethical Rules (FERs):}
    Divided into two subsets:
    \begin{itemize}
        \item \textbf{Fuzzy Ethical Risk Rules (FERRs):}
        \[
        (e_i(F_i) \land e_j(F_j) \land \ldots) \Rightarrow Risk(F_r), \ \beta
        \]
        mapping ERFs to fuzzy risk levels, with certainty factor \(\beta \in [0,1]\).

        \item \textbf{Fuzzy Ethical Decision Rules (FERDs):}
        \[
        (Risk(F_r) \land e_i(F_i) \land \ldots) \Rightarrow Action(F_a), \ \beta
        \]
        mapping risk levels and ERFs to fuzzy actions or decisions.
    \end{itemize}

    \item \textbf{Principle Mapping Function (T):}
    A traceability mapping
    \[
    T : FER \to P
    \]
    associating each fuzzy rule with one or more ethical principles. The set of all possible ethical principles is domain dependent.
    This mapping supports explainability by making each decision's ethical rationale explicit.
\end{enumerate}

\paragraph{Output.}
A recommended \textbf{action or decision} \(a \in A\) chosen from a finite action set, together with:
\begin{itemize}
    \item its associated \emph{fuzzy ethical risk level}, and
    \item its \emph{principle trace}, i.e., the set of ethical principles most strongly activated during decision inference.
\end{itemize}

\subsection{Correctness Criteria.}
An fEDM model is considered correct if it satisfies the following properties:
\begin{enumerate}[label=(\roman*)]
    \item \textbf{Structural Soundness:}
    The fuzzy rule base, when mapped into a Fuzzy Petri Net (FPN), is free of structural errors such as
    incompleteness, inconsistency, circularity, and redundancy.

    \item \textbf{Pluralistic Semantic Validity:}
    For any given input, the model’s outputs are consistent with at least one expert-defined referent
    from a set of stakeholder perspectives
    \[
    \mathcal{V} = \{ V_1, V_2, \ldots, V_m \},
    \]
    capturing ethical pluralism and context-dependent acceptability.

    \item \textbf{Traceability Integrity:}
    Each decision must be explainable by the ethical principles \(T(a)\) that justify it,
    enabling human auditors to verify both procedural and normative adequacy.
\end{enumerate}

\subsection{Formal Problem Statement}

Given:
\begin{itemize}
    \item a set of ethically relevant factors \(E\),
    \item a fuzzy rule base \(R = \{FERRs, FERDs\}\),
    \item and a traceability mapping \(T : FERD \to P\),
\end{itemize}
the goal is to design a fuzzy Ethical Decision-Making model \textit{fEDM}
such that:
\begin{enumerate}[label=(\alph*)]
    \item its rule base can be formally verified via translation into a Fuzzy Petri Net (FPN);
    \item its decisions are validated through pluralistic semantic validation against expert referents; and
    \item each decision is traceable to explicit ethical principles.
\end{enumerate}

\noindent
The central challenge is to ensure that EDM models are not only executable and verifiable,
but also \emph{ethically transparent and pluralistically valid},
thereby enabling trustworthy and explainable moral reasoning in artificial agents.

\section{The Extended EDM Model }
\label{proposed}


In our proposed model, EDM is mainly based on the level of ethical risk calculated by a specific module fERA of the fEDM system.
fERA is also based on fuzzy logic (see Section \ref{era}).
Various input and output fuzzy variables that dictate the required ethical behaviour of the machine are identified. Then, it is expressed in terms of fuzzy rules for EDM. These rules map the inputs, and risk levels to outputs (actions/decisions) of the system (FERDs (see Definition 1)).

\subsection{Fuzzy Rule-Based EDM Model}
\label{fuzzyEDM}

\begin{figure}[htbp!]
  \centering
  \includegraphics[width=0.8\linewidth]{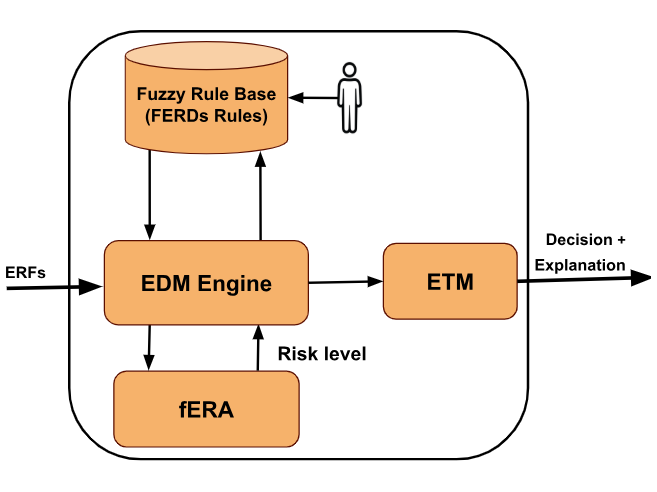}
  \caption{General Structure of \textit{fEDM+}}
  \label{fig:EDM}
\end{figure}

 \vspace{15pt}

Figure \ref{fig:EDM} shows the main components of \textit{fEDM+} which are described in the following:
\begin{itemize}
    \item The fERA module receives from the EDM engine information about the case at hand, particularly the relevant factors/parameters for ethical risk assessment. fERA uses this information to  calculate the risk level value.
    \item The EDM Engine, at minimum, performs the following functions:
    \begin{itemize}
        \item accepts input about the ethically relevant factors (ERFs) of the case at hand; passes them to the fERA module and receives from fERA the risk level.
        \item consults the Fuzzy Rule Base (FERDs) to generate its responses (actions/decisions).
        \item can learn fuzzy decision rules from data (if available), and add them to the Fuzzy Rule Base to be used later for decision making (in the current implemented version, fuzzy inference rules are written manually by human experts).
    \end{itemize}
    \item The Fuzzy Rule Base contains a set of fuzzy rules that define the behavior of the system. These rules describe how fuzzy inputs and fuzzy risk values relate to the outputs (actions/decisions) based on human expert knowledge.
    \item The ETM module: each rule is annotated with one or more ethical principles. When a decision is made, the ETM records which rules were fired, then constructs a justification by mapping the dominant rules and their outputs to these principles.
\end{itemize}

The output of the \textit{fEDM+} becomes:
\begin{itemize}
    \item Action (as before).
    \item principle contribution vector (traceability).
    \item Human-readable explanation.
\end{itemize}
To ensure the correctness and validity of these EDMs, it is essential to verify and validate them during the development process. However, in order to have a reliable EDM model on which many ethical decisions depend, it is fundamental to ensure that the EDM model passes the V\&V criteria.

Now, we give the following formal definition of \textit{fEDM+} that incorporates explainability and traceability.



\begin{definition}[Extended fEDM Model (\textit{fEDM+})]
The $fEDM+$ model is an 7-tuple:

\[
\text{fEDM+} = \left( M, \ERF, \ERL, \Acts, \FER, P, T \right)
\]

where:

\begin{itemize}
    \item $M$ is the model name.
    
    \item $\ERF = \{ e_1, \dots, e_n \}$ is a finite set of fuzzy inputs that represent ethically relevant factors (crisp inputs).
    
    \item $\ERL = \{ r_1, \dots, r_m \}$ is the set of ethical risk levels (fuzzy internal states).
    
    \item $\Acts = \{ a_1, \dots, a_k \}$ is a finite set of possible actions/decisions.
    
    \item $\FER = \{FERRs \cup FERDs\}$ is the rule base, a finite set of fuzzy ethical rules:
    \begin{itemize}
        \item $FERRs$: fuzzy ethical rules for risk assessment.
        \item $FERDs$: fuzzy ethical rules for decision making.
    \end{itemize}
    
    
    \item $P = \{P_1, \dots, P_n\}$ is the universe of ethical principles the system must cover depending on the domain (e.g. in the medical field, $P_1 = \text{Autonomy}$, $P_2 = \text{Non-maleficence}$, \dots).
    
    \item $T : \FER \to 2^{P}$ is the traceability map sending each rule $R$ to its tagged principle(s), i.e., $T(R) = P(R) \subseteq P$.
    
\end{itemize}
\end{definition}

\begin{definition}[Fuzzy Ethical Rule (Extended)]
Each rule $R \in \FER$ is now a 5-tuple:
    \[
 R = \left( \text{Rname}, \Ante, \Cons, \CF, P(R) \right)
\]

where 
\begin{itemize}
\item $Rname$, is the name of the rule.
    \item $Ante$ is the set of antecedents of the rule. An $AntS$ is a disjunction of conjunctions of antecedents. We denote an antecedent by $A(F)$, where $A$ could be an input (ERF) or an internal state (ERL), and $F$ is a fuzzy set. 
    \item $ConS$ is the set of consequents of the rule. We denote a consequent by $C(F)$, where $C$ could be an internal state (ERL), or an output action/decision, and $F$ is a fuzzy set. 
    \item $\CF \in [0,1]$ is the rule certainty factor.
    \item $P(R) \subseteq P$ is the non-empty set of ethical principles that $R$ instantiates (e.g., $\{\text{Autonomy}, \text{Beneficence}\}$).
\end{itemize}
\end{definition}

According to the above definition of the $fEDM$ model, we can distinguish between three types of rules: Type1) $ERFs \xrightarrow{}RLs$, Type2) $RLs \xrightarrow{} AS/DS$, Type3) $ERFs \land RLs \xrightarrow{} AS/DS$.

\subsection{The fERA module}
\label{era}
When developing ethical AI systems, the focus should be on identifying the ethical risks associated with both the system and its use. The primary concern is recognizing these risks, rather than delving into the ethical theories that justify why something is considered ethical. Key questions include: What ethical risks are present in the system we are building? How might users employ the system or product in ways that pose ethical risks (deployment considerations)? In response to this, the development team must carefully consider which features to include or exclude in the AI system to effectively mitigate these risks \cite{blackman2022ethical}.
As symbiosis in SAI systems increases, so does the potential for ethical risks, making effective ERA essential. However, due to the inherent fuzziness and incomplete understanding of complex systems, managing this uncertainty is crucial for ensuring ethical behavior and minimizing harm.
We propose a fuzzy logic based system for ERA. The main components of this system for are:
\begin{description}
    \item[Inputs] These are the factors/parameters relevant for the ethical risk calculation.
	\item[Fuzzification] In this stage crisp input values are converted into fuzzy sets, , allowing real-world data (e.g., temperature, speed) to be interpreted in a way that accounts for uncertainty or vagueness. This is done using membership functions that map input values to a degree of membership between 0 and 1. 
	\item[Inference Engine] The inference engine will consults the  \textit{Fuzzy Rules Base} that contains FERRs rules which are a set of "if-then" rules that define the system's behavior. These rules describe how fuzzy inputs relate to the fuzzy output (the ethical risk) based on expert knowledge. The engine will apply these rules to the fuzzified input to derive fuzzy output sets. It determines which rules are relevant based on the degree of membership of the input values.
	\item[Defuzzification] Converting the fuzzy output sets back into crisp values to implement actions or decisions. Common defuzzification methods include \textit{centroid}, \textit{mean of maximum}, and \textit{bisector}, etc.\footnote{\url{https://it.mathworks.com/help/fuzzy/defuzzification-methods.html}} 
 \item[Output] The only Output in our fuzzy system is the ethical risk level. The computed level can be subsequently considered to make the appropriate decision/action to mitigate that risk.
\end{description}

Figure \ref{fig:fERA} shows the architecture of the fERA module whose approach will be illustrated by means of a case study in Section \ref{case}. For more details of fERA, see \cite{DBLP:conf/beware/DyoubL24}.

\subsection{The ETM Module}
\label{etm}
\paragraph{Ethical Principle Annotation}

Let $P = \{P_1, P_2, ..., P_n \}$ denote the set of core domain ethical principles. Each fuzzy ethical rule $r \in R$ is annotated with a subset of principles $P_r \subseteq P$, representing the ethical rationale underlying that rule.

For example:

\begin{quote}
\textbf{$r$:} \textbf{If} Severity = HIGH \textbf{and} Mental = BAD \textbf{then} Risk = HIGH (CF = 0.9)
\end{quote}

may be annotated with $P_r = \{\text{Nonmaleficence}, \text{Beneficence}\}$.

\paragraph{Rule Traceback}
When the fEDM engine selects an action $a \in A$, it does so by activating a set of rules $\text{\textit{Fired}}(a) \subseteq R$. For each rule $r \in \text{\textit{Fired}}(a)$, the ETM records:

\begin{itemize}
    \item the degree of activation $\mu_r \in [0,1]$,
    \item the certainty factor $\CF \in [0,1]$, and
    \item the annotated principle set $P_r$.
\end{itemize}

\paragraph{Principle Mapping}
The contribution of a principle $p \in P$ to decision $a$ is computed as:

\[
\text{Score}(p,a) = \sum_{\substack{r \in \text{Fired}(a), \\ p \in P_r}} \mu_r \cdot \CF_r
\]

$\mu_r \cdot \CF_r$ is the effective firing strength of a fuzzy rule $r$ \cite{BELYADI2021381}.
The explanation trace for action $a$ is then given by:

\[
\text{Expl}(a) = \left\{ \left(p, \text{Score}(p,a)\right) \mid p \in P \right\}
\]

This yields a weighted distribution of ethical principles justifying the decision. The dominant principle(s) correspond to those with the highest contribution scores.

\begin{figure}[htb!]
  \centering\includegraphics[width=0.9\linewidth]{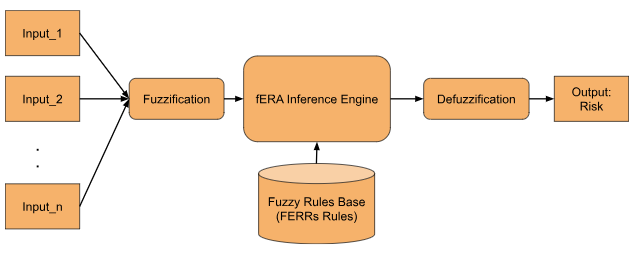}
  \caption{Architecture of the fERA Module}
  \label{fig:fERA}
  \vspace{1cm}
\end{figure}
\subsection{Pluralistic Validation Referents}
\label{vreferent}

Let
\[
\textbf{V} = \{ V_1, V_2, \ldots, V_k \}
\]
be a set of \emph{stakeholder referents}, each representing a distinct ethical or evaluative perspective 
(e.g., human operator, policy authority, user group, etc.).
Each referent $V_i$ captures a normative viewpoint about what constitutes ethically acceptable behavior 
for a decision-making agent in a given application context.

Each referent $V_i$ is formally defined as a tuple:
\[
V_i = ( M_i, \; \Pi_i, \; \rho_i, \; \Theta_i, \; \tau_i )
\]
where:
\begin{itemize}
    \item \( M_i: E^* \to \mathcal{F}(A) \) is the referent's expected mapping from ethically relevant factors \( x \in E^* \) to a fuzzy distribution over actions \( A \) (i.e., a function that returns \( \mu_{M_i(x)}(a) \in [0,1] \) for each action \( a \)). \( M_i \) encodes what the stakeholder expects the action distribution should look like for a given case.
    \item \( \Pi_i \) is a principle priority ordering (partial or total) over the set of principles \( P = \{ p_1, \dots, p_m \} \). This ordering is qualitative and used for interpretation/explanation (not for driving decisions).
    \item \( \rho_i \in [0,1] \) is the referent's subjective risk tolerance (higher \( \rho_i \) = more tolerant of risk).
    \item \( \Theta_i: [0,1] \to 2^A \) maps an estimated risk level \( r \) to the set of actions the referent considers acceptable at that risk.
    \item \( \tau_i \in [0,1] \) is the referent's semantic tolerance (smaller \( \tau_i \) = stricter acceptance).
\end{itemize}

For a given input vector $x \in E^*$, the $fEDM+$ model produces:
\begin{itemize}
    \item \( f_{\text{model}}(x) \in \mathcal{F}(A) \) --- model's fuzzy action distribution (\( \mu_f(a) \)),
    \item \( r(x) \in [0,1] \) --- inferred crisp/normalized risk,
    \item \( p_{\text{model}}(x) \in [0,1]^m \) --- principle-score vector from ETM (one score per principle).
\end{itemize}

\paragraph{Evaluation Metrics}
\begin{enumerate}
    \item \textbf{Action Similarity
} Each referent \( V_i \) compares model action to its own expected mapping \cite{PAPAKOSTAS20131609,mordeson2024fuzzy}. The formula for computing the action similarity score for input \(\mathbf{x}\), normalized in \([0, 1]\) (1 = perfect similarity, 0 = no similarity), is as follows:

\[
S_A^{(i)}(\mathbf{x}) = \frac{\sum_{a \in A} \mu_S(a) \cdot \mu_R(a)}{\max\left(\max_{a \in A} \mu_S(a), \max_{a \in A} \mu_R(a)\right)}
\]

Where:
\begin{itemize}
\item \(\mathbf{x}\): input vector values.
\item \(A\): set of discrete actions.
\item \(a\): specific action in \(A\).
\item \(\mu_S(a)\): membership degree of action \(a\) in the system's decision distribution.
\item \(\mu_R(a)\): membership degree of action \(a\) in the referent's expected action distribution (typically 1 for expected action, 0 otherwise).
\item \(\sum_{a \in A}\): summation over all actions \(a \in A\).
\item \(\max_{a \in A} \mu_S(a)\): maximum membership degree in the system's distribution.
\item \(\max_{a \in A} \mu_R(a)\): maximum membership degree in the referent's distribution.
\item \(\max(\cdot, \cdot)\): maximum between the two distribution maxima.
\end{itemize}


\item \textbf{Principle-Order Consistency (Ordinal Test)}
Let \( O_i = \{ (p_u, p_v) \mid p_u \succ_i p_v \} \) be the set of ordered principle pairs from \( \Pi_i \). For small tolerance \( \epsilon \geq 0 \) (e.g., 0.02 to tolerate numerical noise), define
\[
\operatorname{sat}_i(p_u, p_v, x) = 
\begin{cases}
1 & \text{if } p_{\text{model}}(x, p_u) \; \geq \; p_{\text{model}}(x, p_v) - \epsilon, \\
0 & \text{otherwise}.
\end{cases}
\]
is a binary satisfaction indicator that checks whether the model respects one stakeholder’s expected ordering between two ethical principles for a given input $x$. Thus, if the model assigns principle $p_u$ a score at least as high as 
$p_v$ (within a small tolerance $\epsilon$), 
the ordering is respected and $\text{sat}_i = 1$.
Otherwise, the ordering is violated and $\text{sat}_i = 0$.
Then
\[
S_P^{(i)}(x) \; = \; \frac{1}{|O_i|} \sum_{(p_u, p_v) \in O_i} \operatorname{sat}_i(p_u, p_v, x).
\]
\( S_P^{(i)} \in [0,1] \) measures how many ordering constraints the model's principle scores satisfy.
\end{enumerate}

\paragraph{Validation Rule}
The $fEDM+$ model is said to be \emph{semantically valid} if, for every input vector $x \in E^*$ ($E^* = ERF$),
there exists at least one referent \(V_i\) such that:
\[
S_A^{(i)}(x) \; \geq \; 1 - \tau_i \quad \text{and} \quad S_P^{(i)}(x) \; \geq \; 1 - \tau_i,
\]
(i.e., both action-level and principle-order agreement are within the referent's tolerance).
Instead of comparing model outputs to a single referent, we validate against each $V_i \in V$, yielding a vector of semantic-validation scores that reflect different stakeholder perspectives.
This criterion generalizes classical single-referent validation by allowing model outputs to be considered acceptable when they align with the ethical expectations of \emph{at least one} legitimate stakeholder, thus formally incorporating ethical pluralism into the validation process. 

\section{Formal Verification and Validation}

Modeling EDM as a fuzzy rule-base, we may encounter the \textit{structural errors} that typically affect any fuzzy rule-base, among which the most popular are the following \cite{1185845}: 

\begin{itemize}
    \item \textit{Incompleteness}: It results from missing rules.
    \item \textit{Inconsistency}: Inconsistent rules lead to conflicts and should be removed from the rule base. This occurs when a set of rules produces contradictory conclusions under certain conditions. 
    \item \textit{Circularity}: Circular rules occur when several rules have a circular dependency. This circularity can lead to an infinite reasoning loop and must be resolved. 
    \item \textit{Redundancy}: Redundant rules are unnecessary in a rule base. They increase the size of the rule base and may lead to additional, useless deductions. 
\end{itemize}

\subsection{Verification Criteria of EDM Models}
\label{ver}

\begin{proposition}[Fuzzy Rule-Base to FPN Equivalence]
\label{prop:rule_to_fpn}
For any fuzzy Ethical Decision-Making model 
$\text{fEDM+} = \left( M, \ERF, \ERL, \Acts, \FER, P, T \right)$,
where 
$FER = \{\text{FERRs}, \text{FERDs}\}$ 
is a finite fuzzy rule base, there exists an equivalent Fuzzy Petri Net (FPN) 
$N$
such that:
\begin{enumerate}[label=(\roman*)]
    \item each antecedent and consequent in $FER$ is represented by a place in $N$;
    \item each rule in $FER$ is represented by a transition in $N$;
    \item certainty factors and principals annotations are preserved as transition weights; and
    \item the execution semantics of $N$ correspond to the inference semantics of $FER$.
\end{enumerate}
\end{proposition}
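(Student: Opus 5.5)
The proof is constructive: from the rule base $\FER$ we build a net $N$ and then check clauses (i)--(iv) one at a time. We use the standard fuzzy Petri net of \cite{10.1007/3-540-56863-8_44,liu2023knowledge},
\[
N = (Pl, Tr, I, O, \alpha, \mu, \omega),
\]
where $Pl$ is the set of places and $Tr$ the set of transitions. The maps $I, O : Tr \to 2^{Pl}$ give the input and output places of each transition, $\alpha : Pl \to [0,1]$ is the token (truth-degree) marking, and $\mu : Tr \to [0,1]$ assigns certainty factors. To accommodate principle annotations we add one more label, $\omega : Tr \to 2^{P}$.

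The construction proceeds in three steps.
\begin{enumerate}[label=(\alph*)]
\item For every distinct fuzzy proposition $A(F)$ or $C(F)$ occurring as an antecedent or consequent of some $R \in \FER$, create exactly one place. Here $A \in \ERF \cup \ERL$ and $C \in \ERL \cup \Acts$. Since $\FER$ is finite, $Pl$ is finite, and identifying syntactically equal propositions across rules gives clause (i).
\item Antecedents are disjunctions of conjunctions, so we first put each rule into normal form: a rule whose antecedent is $D_1 \lor \dots \lor D_k$, each $D_j$ a conjunction, is split into $k$ conjunctive rules. These copies inherit the same $\Cons$, $\CF$ and $P(R)$, and they are recorded as belonging to the single rule $R$.
\item For each conjunctive rule we create one transition $t$, with $I(t)$ the places of its conjuncts, $O(t)$ the places of its consequents, $\mu(t)=\CF_R$ and $\omega(t)=P(R)=T(R)$. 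This gives clauses (ii) and (iii), where ``represented by a transition'' is read up to the DNF splitting. Alternatively, one can avoid splitting by adding an intermediate place per disjunct together with an OR-transition; I would note this variant in a remark.
\end{enumerate}

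For clause (iv), we fix the firing semantics to mirror the inference semantics of $\FER$ used by fERA and by the EDM engine. A transition $t$ is enabled when every $p \in I(t)$ carries a token. Its activation is $\mu_t = \min_{p \in I(t)} \alpha(p)$, and firing sets, for each $q \in O(t)$,
\[
\alpha'(q) = \max\bigl(\alpha(q),\ \mu_t \cdot \mu(t)\bigr).
\]
This is exactly the effective firing strength $\mu_r \cdot \CF_r$ used in the ETM, with $\max$ implementing aggregation of rules that share a consequent.

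The correspondence is then proved by induction on the layered structure of the rules. Type~1 rules read only ERF places. Type~2 and Type~3 rules read ERL places, which are produced only by Type~1 transitions. So if the rule base is acyclic, as circularity-freeness guarantees, the net is acyclic. Under that assumption we show: if the initial marking places the fuzzified membership degrees of the ERF inputs, then after every transition of depth $\le d$ has fired, each place of depth $\le d$ holds exactly the degree that fuzzy inference over $\FER$ assigns to the corresponding proposition. The base case is fuzzification itself. The inductive step follows by comparing one firing with one rule application, using $\min$ for AND and $\max$ for OR and aggregation. Since firing is monotone and idempotent, the final marking does not depend on firing order.

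Two loose ends remain. First, defuzzification of the risk output happens outside the net, so equivalence should be stated at the level of fuzzy membership degrees, not crisp outputs. Second, because $\omega$ is attached to transitions, the ETM scores can be read off the net directly: $\text{Score}(p,a)$ is the sum of $\mu_t\cdot\mu(t)$ over fired transitions $t$ leading to $a$ with $p\in\omega(t)$.

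The main obstacle is clause (iv), because ``inference semantics of $FER$'' is not formally fixed earlier in the paper. I would first pin it down as min/max composition with CF scaling. I would then state explicitly that equivalence holds for that semantics, and for circular rule bases only up to the least fixed point of the firing operator; that fixed point exists because the operator is monotone on the finite lattice of markings reachable from values in $[0,1]$.
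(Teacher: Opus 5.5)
Your proposal is essentially the paper's own proof. Propositions become places, rules become transitions with input arcs from antecedents and output arcs to consequents, the certainty factor and principle set $P(R)$ are attached to the transition, and firing is identified with applying the rule. The only difference is rigor, and it favors you: the paper's sketch silently assumes purely conjunctive antecedents and merely asserts that truth degrees propagate ``according to $\beta_i$''. You instead make the DNF splitting explicit (which is exactly what the paper's case study does, so clause (ii) really holds per disjunct rather than per rule), fix a $\min$/product/$\max$ firing semantics consistent with the paper's Type~1--3 reasoning rules, and justify clause (iv) by induction over the layered net, while stating that the equivalence holds relative to that chosen semantics.
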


\begin{proof}[Proof Sketch]
Let $FER$ contain $m$ fuzzy rules of the form:
\[
r_i : 
(e_1(F_1) \wedge \cdots \wedge e_k(F_k)) 
\Rightarrow c(F_c), \, \beta_i,
\]
where each antecedent $e_j(F_j)$ and consequent $c(F_c)$ 
is a fuzzy proposition associated with certainty factor $\beta_i \in [0,1]$.

\textbf{Place Mapping.}
For each fuzzy proposition 
$p \in \{ e_1(F_1), \ldots, e_k(F_k), c(F_c) \}$,
define a corresponding place $p_j \in P$.

\textbf{Transition Mapping.}
For each rule $r_i$, define a transition $t_i \in T$.

\textbf{Arc Construction.}
For each antecedent $p_j$, add an input arc $(p_j, t_i) \in I$.
For each consequent $q$, add an output arc $(t_i, q) \in O$.

\textbf{Weight Assignment.}
Assign the certainty factor $\beta_i$ of rule $r_i$, and the set of annotating principles $P(r_i)$ 
as the weight of transition $t_i$.

\textbf{Semantics Preservation.}
Firing $t_i$ in $N$ corresponds to applying fuzzy inference for $r_i$:
the marking update propagates truth degrees from antecedent places 
to consequent places according to $\beta_i$.

Thus, the FPN
\[
N = (P, T, I, O, f, \alpha, \gamma)
\]
constructed in this manner is behaviorally equivalent to 
the fuzzy rule base $FER$. \qedhere
\end{proof}

\begin{corollary}[Verifiability]
Since every \textit{fEDM+} model can be mapped to an equivalent FPN, the
structural verification of \textit{fEDM+} models reduces to the structural 
verification of their corresponding FPNs. 
Therefore, existing Petri net–based techniques for detecting 
\emph{incompleteness}, \emph{inconsistency}, \emph{circularity}, 
and \emph{redundancy}
can be directly applied to \textit{fEDM+} models.
\end{corollary}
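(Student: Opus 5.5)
The Corollary follows from Proposition~\ref{prop:rule_to_fpn}, once we check that each of the four structural error notions is \emph{transported} along the construction used in its proof. Let $\Phi$ denote that construction, sending a rule base $FER$ to the net $N=\Phi(FER)$. I would first record what $\Phi$ does:
\begin{itemize}
\item fuzzy propositions $A(F)$ correspond one-to-one to places;
\item rules $r_i$ correspond one-to-one to transitions $t_i$;
\item antecedent and consequent membership corresponds exactly to the input arcs $I$ and output arcs $O$;
\item the pair $(\CF_{r_i},P(r_i))$ is stored as the weight of $t_i$.
\end{itemize}
Because these correspondences are bijective and $\Phi$ is injective on rule bases, every statement about the rules, antecedents and consequents of $FER$ can be rewritten as a statement about transitions, pre-sets and post-sets of $N$. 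By item (iv) of the Proposition, every statement about inference in $FER$ can likewise be rewritten as a statement about firing sequences and markings of $N$.

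The main step is to take each error class and exhibit its Petri-net counterpart.
\begin{itemize}
\item \emph{Circularity.} A chain $r_1,\dots,r_k$ in which a consequent of $r_j$ is an antecedent of $r_{j+1}$, and $r_k$ feeds back into $r_1$, corresponds exactly to a directed cycle $t_1\to p\to t_2\to\cdots\to t_1$ in the underlying bipartite graph of $N$.
\item \emph{Redundancy.} A subsumed or duplicate rule corresponds to a transition whose pre-set contains that of another transition with the same post-set. Redundancy that arises through chains corresponds to reachability of the same consequent place by an alternative firing sequence.
\item \emph{Inconsistency.} Contradictory conclusions correspond to a marking in which places labelled with incompatible fuzzy sets of the same variable (or mutually exclusive actions) are both marked from a common reachable marking.
\item \emph{Incompleteness.} An input region with no applicable rule corresponds to an initial marking of the ERF places from which no output (action) place becomes marked. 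Equivalently, some action or risk place is a dead or unreachable sink.
\end{itemize}
For each class I would prove both directions: an error in $FER$ yields the net property in $N$, and conversely. Each direction uses only the bijections above together with item (iv).

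Having both directions shows that $FER$ is free of each error type if and only if $N$ is. The existing FPN algorithms cited earlier (\cite{1185845,liu2023knowledge}) decide exactly these net properties, namely cycle detection, pre-set/post-set subsumption, and reachability and incidence-matrix analysis. They can therefore be applied to $\Phi(FER)$ unchanged, and their verdicts pulled back to $FER$. One point needs noting: the principle annotations $P(r)$ sit only in the transition weights and play no role in firing. Hence they do not affect any of the four properties, and the extension from fEDM to \textit{fEDM+} leaves verification unchanged.

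The obstacle I expect is item (iv) itself. The Proposition gives it only as a proof sketch, and inconsistency and incompleteness are semantic notions that depend on the fuzzy firing rule. I would therefore fix an explicit FPN semantics: the token value of $t_i$'s output is $\min$ over the input places times $\beta_i$, and output places aggregate by $\max$. I would then prove by induction on the length of inference chains that the marking of every place equals the truth degree derived by the rule base. Since acyclicity is checked first, this induction is well-founded on the verified nets. Once this lemma holds, the correspondences for inconsistency and incompleteness follow directly.
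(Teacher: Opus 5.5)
Your reduction has the same shape as the paper's, which reads the corollary directly off Proposition~\ref{prop:rule_to_fpn} and the cited FPN techniques, and your remark that the tags $P(r)$ play no role in firing is right. The trouble is in the details you add to make it rigorous. First, the bijection you start from fails for the models the corollary quantifies over. Definition~2 lets $\Ante$ be a disjunction of conjunctions, and R1--R3 of Listing~\ref{lst1} have exactly that form. The construction in the proof sketch of Proposition~\ref{prop:rule_to_fpn}, however, only treats purely conjunctive rules. That is why the paper's pipeline begins with a normalization stage, after which R1 alone becomes the four transitions $t_1,\dots,t_4$ and six rules yield twelve transitions. So your one-to-one correspondence between rules and transitions holds only for the normalized base. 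You need an explicit step showing that normalization is harmless and that the four error notions are read on the normalized base. With your $\max$ aggregation this is one line. A rule $(C_1\vee\cdots\vee C_q)\Rightarrow c$ with factor $\beta$ yields $\beta\cdot\max_k \min C_k=\max_k(\beta\cdot\min C_k)$, which is exactly what the $q$ split transitions deliver (here $\min C_k$ is the least degree among the conjuncts of $C_k$).

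Second, the obstacle is misplaced, and your proposed fix makes the inconsistency clause misfire. The analysis the corollary appeals to (stages 3--4 of the pipeline, Figure~\ref{fig:reach}) works with $0/1$ token-presence markings started from crisp one-hot assignments to the ERF places. All four error types are therefore properties of the normalized rule graph and transfer along items (i)--(iii) alone; equality of fuzzy degrees is what dynamic validation ($RR1$, $RR2$) needs, not structural verification. If you instead read your inconsistency clause over genuinely fuzzy markings, it flags ordinary fuzzy overlap. Overlapping trapezoidal membership functions make places such as Severity(low) and Severity(medium) co-marked from the outset. The model also routinely gives nonzero degree to mutually exclusive actions: the reported $S_A$ values $0.81$ and $0.49$ on a single input are exactly $\mu_S(\texttt{tryAgainNow})$ and $\mu_S(\texttt{tryAgainLater})$, since $\mu_R$ is one-hot. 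A reachability graph cannot enumerate a continuum of real-valued initial markings anyway. Likewise, incompleteness over crisp ``input regions'' depends on whether the membership functions cover each ERF's domain, which $N$ does not encode. Your ``equivalently'' is also wrong: an unreachable action place and an input combination that reaches no action are different defects. The bridge lemma you actually need is the support version. For rules with $\beta_i>0$, under $\min\cdot\beta$ firing and $\max$ aggregation on an acyclic net, a place receives nonzero degree iff it is derivable from the support of the initial marking by $0/1$ forward chaining. That is what licenses running the boolean analysis on a fuzzy system; circularity and subsumption need no semantics at all.
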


The verification process aims to detect and eliminate structural errors in the rule base. It consists of four stages:
\begin{enumerate}
    \item Normalization of rules: We apply the procedure introduced in \cite{1185845} to put any rule in the following form:    
    $P_1 \wedge P_2 \wedge ... P_{j-1} \rightarrow P_j$
    \item Transformation of the rule-base to a fuzzy Petri net (FPN): The antecedents, the consequents and the name of a rule are mapped into input places, output places, and transition of the FPN, respectively.
    \item Generation of the FPN reachability graph: Here, we adopt the algorithm presented in \cite{814327} to generate a  graph where each node is a marking of the FPN, and each directed edge represents transition firing and connects one node to the other.
    \item Error detection: We verify the incompleteness, inconsistency, circularity, and redundancy of the rule base by checking the FPN reachability graph \cite{814327}.
\end{enumerate}

Adding the $ETM$ module to our \textit{fEDM} implies the following further checks:
\begin{itemize}
 \item Principle‐Coverage Checks: Beyond checking for dangling antecedents or circular rules, we now also verify that every high‐level ethical principle (e.g. Autonomy, Beneficence, Non-maleficence) is covered by at least one rule in the combined FERR + FERD net. If a principle has no associated rules, that flags an incompleteness error in the ethical specification.

\item Consistency Across Principles: We can detect “cross-principle” inconsistencies: for example, if two rules tagged with mutually exclusive principles (say, Autonomy vs. Non-maleficence) can fire simultaneously under the same inputs, the reachability graph will expose conflicting transitions.

\item Redundancy Among Principle Rules: Rules that duplicate the same principle tag and antecedent–consequent mapping become easier to spot, letting us prune redundant ethical regulations.
\end{itemize}


\subsubsection{Principle-Level Structural and Ethical Verification}
\label{subsec:principle_verification}

Now, we formalize principle-level verification within the fEDM-FPN framework so that it can be stated, tested, and proven as part of the structural verification pipeline. 

In the \textit{fEDM+} model, each fuzzy rule 
$r_i \in R = \{\text{FERR}, \text{FERD}\}$ 
is explicitly associated with one or more ethical principles 
through a \emph{traceability function}:
\[
T : R \rightarrow 2^{P},
\]
Each rule can thus be expressed as:
\[
r_i : (\text{Antecedent}_i) \Rightarrow (\text{Consequent}_i), \; \beta_i, \;
T(r_i) = P_i \subseteq P.
\]
In the corresponding FPN representation 
$N = (P, T, I, O, f, \alpha, \gamma)$,
transitions $t_i \in T$ correspond to rules $r_i$,
and each transition inherits its ethical principle labels via $T(r_i)$.

To ensure that the ethical rule base is both \emph{structurally sound} and 
\emph{ethically coherent}, we define three complementary verification checks:
\begin{enumerate}[label=(\roman*)]
    \item \textbf{Principle Coverage},
    \item \textbf{Cross-Principle Consistency}, and
    \item \textbf{Principle Redundancy}.
\end{enumerate}

\paragraph{Principle Coverage Verification (Completeness of Principle Tagging): }
\label{subsubsec:principle_coverage}

The objective of coverage verification is to confirm that each ethical principle 
$p \in P$ is represented by at least one rule in $R$.
Formally, we can define the coverage function:
\[
\text{Coverage}(p) = 
\begin{cases}
1 & \text{if } \exists r_i \in R : p \in T(r_i), \\
0 & \text{otherwise.}
\end{cases}
\]
The coverage vector is therefore:
\[
C = [\text{Coverage}(p_1), \ldots, \text{Coverage}(p_n)].
\]
The model satisfies the \emph{Principle-Coverage completeness} property if and only if:
\[
\forall p \in P, \; \text{Coverage}(p_i) = 1.
\]
Missing coverage, i.e.,
$\exists p_j \in P : \text{Coverage}(p_j) = 0$,
indicates an \emph{incompleteness error} in the rule base (flags incompleteness error in the ethical specification).
In the FPN, this check corresponds to verifying that for every principle $p$, 
at least one transition $t_i$ is labeled with $p$.

\paragraph{Cross-Principle Consistency Verification: }
\label{subsubsec:cross_principle_consistency}

To detect \emph{cross-principle inconsistencies}, we define a set 
$\mathcal{C} \subseteq P \times P$ of \emph{incompatible principle pairs}, 
such as $(\text{Autonomy}, \text{Nonmaleficence})$.
A conflict arises when two rules $r_i, r_j \in R$ satisfy:
\[
(p_a, p_b) \in \mathcal{C}, \quad 
p_a \in T(r_i), \; p_b \in T(r_j),
\]
and both are simultaneously enabled by the same input marking $M$ in the FPN:
\[
\exists M : \text{Enabled}(r_i, M) \wedge \text{Enabled}(r_j, M).
\]
Here, ``Enabled'' denotes that all antecedent places of a transition 
have nonzero truth degree (token value) under marking $M$.
Thus, simultaneous enablement implies that two ethically conflicting rules 
could be fired under identical conditions. The reachability graph will expose conflicting transitions.

\paragraph{Detection Procedure.}
\begin{enumerate}[label=(\alph*)]
    \item Construct the reachability graph of $N$.
    \item For each reachable marking $M$, identify all enabled transitions $t_i$.
    \item For each enabled pair $(t_i, t_j)$, check if 
    $(p_a, p_b) \in \mathcal{C}$ with 
    $p_a \in T(r_i), p_b \in T(r_j)$.
    \item If such a pair exists, flag a cross-principle conflict.
\end{enumerate}


\paragraph{Principle Redundancy Verification: }
\label{subsubsec:principle_redundancy}

Two rules $r_i, r_j \in R$ are considered \emph{redundant} if they share 
identical antecedent and consequent structures and are associated 
with the same ethical principle tags:
\[
(\text{Ante}(r_i) = \text{Ante}(r_j)) \wedge
(\text{Cons}(r_i) = \text{Cons}(r_j)) \wedge
(T(r_i) = T(r_j)).
\]
In the FPN representation, redundancy corresponds to transitions 
$t_i$ and $t_j$ that have identical input and output place sets, 
equal transition weights $\beta_i = \beta_j$, and identical labels $T(r_i) = T(r_j)$.
\paragraph{Detection Procedure.}
\begin{enumerate}[label=(\alph*)]
    \item Construct the reachability graph of $N$.
    \item For each reachable marking $M$, identify all enabled transitions $t_i$.
    \item For each enabled pair $(t_i, t_j)$ where $t_i \equiv t_j$, check if 
   firing $t_i$ leads to the same marking as firing $t_j$.
    \item If they are always identical, then transitions are redundant.
\end{enumerate}



\begin{proposition}[Principle-Level Verification]
\label{prop:principle_verification}
Let \textit{fEDM+} be a fuzzy Ethical Decision-Making model 
with principle traceability function $T : R \rightarrow 2^{P}$.
Then structural verification of the corresponding FPN $N$ 
can be extended to verify the following conditions:

\begin{enumerate}[label=(\alph*)]
    \item \textbf{Principle Coverage:} 
    $\forall p \in P,\, \exists r_i \in R : p \in T(r_i)$.
    \item \textbf{Cross-Principle Consistency:} 
    $\nexists (r_i,r_j) \in R^2,\, M$ such that 
    $(p_a,p_b) \in \mathcal{C}$ and both $r_i,r_j$ are enabled under the marking $M$.
    \item \textbf{Principle Redundancy:} 
    $\nexists (r_i,r_j) \in R^2$ with identical antecedents, consequents, and principle tags.
\end{enumerate}

Violations of (a), (b), or (c) correspond respectively to 
\emph{principal incompleteness}, \emph{principal inconsistency}, 
and \emph{principal redundancy}.
\end{proposition}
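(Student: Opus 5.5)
The plan is to prove the proposition constructively: build on the FPN $N=(P,T,I,O,f,\alpha,\gamma)$ given by Proposition~\ref{prop:rule_to_fpn}, and show that each of (a)--(c) can be decided by a finite procedure on $N$, or on its reachability graph, once the transitions carry their principle labels. The first step is to use clause (iii) of Proposition~\ref{prop:rule_to_fpn}. It tells us that each transition $t_i$ carries the pair $(\beta_i, T(r_i))$ as its weight. This gives a labelling $\lambda: T \to 2^{P}$ with $\lambda(t_i)=T(r_i)$, and the map $r_i \mapsto t_i$ is a bijection. Every principle-level property of the rule base can therefore be restated as a property of the labelled net, with no loss of information.

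Next I will handle the three conditions in turn, since they differ in difficulty.

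For (a), coverage is purely syntactic. Because $R$ and $P$ are finite, one can compute $\bigcup_{t\in T}\lambda(t)$ and compare it with $P$. The vector $C$ is then computed in $O(|T|\cdot|P|)$ time, and $C$ contains a zero exactly when principle incompleteness holds.

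For (c), I will also argue syntactically. Two rules have identical antecedents and consequents exactly when their transitions have identical input-arc sets $I(t_i)=I(t_j)$ and output-arc sets $O(t_i)=O(t_j)$. This equivalence follows from the place and arc mappings in the proof of Proposition~\ref{prop:rule_to_fpn}. Equal tags means $\lambda(t_i)=\lambda(t_j)$. A pairwise comparison over $T^2$ therefore decides (c). I will add a remark that the reachability-graph procedure stated above is consistent with this test: identical input places give identical enabling sets, and when the weights are also equal, firing either transition yields the same successor marking.

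For (b), which I expect to be the main obstacle, I need the reachability graph of $N$, built as in \cite{814327}. The task is to show that checking pairs of transitions co-enabled at reachable markings is both sound and complete for the condition $\exists M:\mathrm{Enabled}(r_i,M)\wedge\mathrm{Enabled}(r_j,M)$.

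The difficulty is that the fuzzy token values range over $[0,1]$, so the marking space is infinite. The plan is to abstract each marking to its support, namely the set of places with nonzero truth degree. Enabledness as defined above depends only on this support. Firing propagates nonzero values to output places whenever $\beta>0$, and otherwise leaves the support unchanged. The abstract state space is therefore bounded by $2^{|P_N|}$, where $P_N$ is the place set, so the abstract reachability graph is finite.

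Soundness and completeness then come from quantifying over all initial supports on the input (ERF) places. This requires showing that every concrete input marking maps to such an abstract initial state, and that abstract successors correspond to concrete firings; the latter is the semantics-preservation clause (iv) of Proposition~\ref{prop:rule_to_fpn}. Once this is in place, iterating over enabled pairs at each abstract node and testing membership in $\mathcal{C}$ decides (b) in finite time.

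Finally, I will note that each violation is witnessed by a concrete object: an uncovered principle for (a), a reachable marking with a conflicting enabled pair for (b), or a duplicate transition pair for (c). These witnesses give the stated correspondence to principle incompleteness, inconsistency, and redundancy.
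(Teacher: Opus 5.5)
The paper gives no separate proof of this proposition. Its justification is the three procedures stated just before it: read the coverage vector $C$ off the transition labels, enumerate co-enabled transition pairs on the reachability graph, and compare transitions by places, weights and labels (or by their successor markings). Your decomposition is the same, and your argument is correct, but you make explicit what the paper leaves tacit. The support abstraction is what makes a finite graph adequate for $\exists M$ over $[0,1]$-valued markings; the paper uses it silently, since its graph nodes are $0/1$ vectors. Your test for (c) is purely syntactic and follows the statement, which, unlike the paper's FPN criterion, does not require $\beta_i=\beta_j$.

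Making this explicit exposes two choices the paper glosses over. First, the verdict in (b) depends on which initial supports are admissible. You allow all supports on the ERF places, whereas the paper's case-study graph starts only from the one-hot markings $M_1,\dots,M_9$. On the net of Figure~\ref{fig:petri}, your procedure flags $\{P_1,P_2,P_6\}$ (a severity value in the LOW/MEDIUM overlap, mental BAD). There $t_4$ (Autonomy) and $t_8$ (Nonmaleficence, Beneficence) are co-enabled, a conflict the case study misses. Conversely, soundness requires restricting the initial states to supports actually realizable by fuzzification, which is a finite set computable from the membership functions.

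Second, the bijection $r_i\mapsto t_i$ and the \emph{exactly when} in (c) hold only for the normalized conjunctive rules that Proposition~\ref{prop:rule_to_fpn} treats. In Figure~\ref{fig:petri_r}, $t_{11}$ (from R5) and $t_{15}$ (from the disjunctive R9) share input place, output place and tags, although R5 and R9 have different antecedents. So rule-level redundancy must be compared on sets of transitions.

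A minor point: the paper's firing consumes input tokens (e.g.\ $M_1\to M_{10}$ clears $P_1,P_4$), so the support does not stay unchanged when $\beta=0$. Your abstraction still works, because the successor support remains a function of the current support and the fired transition.
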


\subsection{Validation of EDM Models}
\label{val}







While verification is about dealing with structural errors in the rule base, validation is about dealing with semantic errors. Two types of semantic errors can be identified \cite{4675474}:
\begin{itemize}
    \item Semantic incompleteness: This kind of error occurs if the model does not meet users' requirements and is reflected as missing rules, and/or missing antecedents or consequents in a rule from the users' point of view.
    \item Semantic incorrectness: This type of error occurs if the model produces an output that is different from the expected output for given identical input data in the validation referent. Semantic incorrectness also indicates that the model does not meet the needs of users.
\end{itemize}
To validate the created \textit{fEDM+} model, we will need to collect multiple validation referents from different domain experts against which to validate our model.

\section{A Case Study in the Healthcare Domain}
\label{case}
To illustrate the \textit{fEDM+} framework and to demonstrate how ethical risk assessment, decision making, explainability, and verification are jointly supported, we revisit the Patient Dilemma case study and reformulate it using the updated formal definitions introduced in Section \ref{fuzzyEDM}.

\textbf{Patient Dilemma Problem}: \textit{A care robot approaches her competent adult patient to give her her medicine in time and the patient rejects to take it. Should the care robot try again to change the patient’s mind or accept the patient’s decision as final?} 

The dilemma arises because, on the one hand, the care robot may not want to risk diminishing the patient’s autonomy by challenging her decision; on the other hand, the care robot may have concerns about why the patient refuses the treatment. Three of the four Principles/Duties of Biomedical Ethics are likely to be satisfied or violated in dilemmas of this type: the duty of Respect for Autonomy, the duty of Nonmaleficence and the duty of Beneficence.

In this case study, \textit{fERA} addresses the ethical risk of physical harm to the patient, which works as the basis for decision making by the care robot. In order to evaluate this ethical risk, the care robot can consider different parameters such as the severity of the health condition of the patient, the mental/psychological condition of the patient, physiological indicators of well-being, etc. All these parameters can be considered fuzzy concepts. In order to keep the case study as simple as possible, we consider only two parameters, viz. severity and mental conditions as input. Both inputs are rated on a scale between 0 and 10. The crisp values are then fuzzified, both into 3 sets. For the severity of the health condition, the fuzzy sets are: LOW, MEDIUM, HIGH. For the mental/psychological condition, the fuzzy sets are: BAD, AVERAGE, GOOD.
Starting from these two inputs, once fuzzified, \textit{fERA} calculates the risk level on a scale between 0\% and 100\%. Also for the output there are 3 fuzzy sets: LOW, MEDIUM, HIGH.
The inputs and the output are the antecedents and the consequents, respectively, of the rules employed by the \textit{fERA}. The fuzzy inference rules used to derive the output of \textit{fERA} (ethical risk) from the inputs are FERRs.R1, FERRs.R2 and FERRs.R3 in Listing \ref{lst1}.
The initial inputs, in this case severity and mental, are provided by the user (the care robot in this case). These values are then fuzzified using a membership function (MF) \cite{ross2005fuzzy}. In this paper, we choose to use the trapezoidal MF because there is an interval of input crisp values for which the membership degree to the fuzzy set is 100\%.
The fuzzified input is then processed through the above mentioned inference rules
The final output is subsequently defuzzified using the centroid method (\cite{Lee1990FuzzyLI}) to find a single crisp value which defines the output of a fuzzy set. This final value provides the level of ethical risk on the life of the patient. FERD rules use this risk value to decide on the action to take in each case. Unlike the original version of the model, each rule (FER) is now explicitly associated with one or more ethical principles, enabling downstream explainability and principled verification, see Listing \ref{lst1}. This explicit principle tagging enables the ETM module to compute, for each recommended action, a structured ethical justification showing how strongly each principle contributed to the final decision.

\begin{lstlisting}[mathescape, caption={Formal Description of the $PatientEDM^+$Model},label={lst1}]
EDMm=(PatientEDMm, ERFs, RLs, As/Ds, FERs{FERRs,FERDs}, P, T)
EDMm.ERFs={Severity, Mental}
EDMm.RLs={Risk}
EDMm.As={Actions/Decisions}
EDMm.FERs={R1,R2,...,R6}
EDMm.P={Principles}
EDM.T={Traceability_Mapping}
EDMm.ERFs.Severity={Severity, value}
EDMm.ERFs.Severity.value= {low, medium, high}
EDMm.ERFs.Mental={Mental,value}
EDMm.ERFs.Mental.value={good, average, bad}
EDMm.RLs.Risk={RiskLevel,value}
EDMm.RLs.Risk.value={low, medium,high}
EDMm.As.Actions={Action/Decision, value}
EDMm.As.Action.value={accept, tryAgainLater,tryAgainNow}
EDMm.P.Principles={Principle, value}
EDMm.T.Traceability_Mapping={T($R_i$) | $R_i \in$ EDMm.FERs, value}
EDMm.P.Principles.value = {Autonomy, Beneficence, Nonmaleficence}
EDMm.T.Traceability_Mapping.value $\subseteq$ {2^(EDMm.P.Principles.value)}
EDMm.FERs.FERRs.R1=(Rule1, (Severity(low) $\wedge$ Mental(good))$\vee$ (Severity(medium) $\wedge$ Mental(good)) $\vee$ (Severity(low) $\wedge$ Mental(average))$\vee$ (Severity(low) $\wedge$ Mental(bad)), Risk(low),0.80, {Autonomy})

EDMm.FERs.FERRs.R2=(Rule2, (Severity(high) $\wedge$ Mental(good))$\vee$ (Severity(medium) $\wedge$ Mental(average)), Risk(medium),0.70, {Beneficence} )

EDMm.FERs.FERRs.R3=(Rule3, (Severity(high) $\wedge$ Mental(average))$\vee$ (Severity(medium) $\wedge$ Mental(bad)) $\vee$ (Severity(high) $\wedge$ Mental(bad)), Risk(high),0.90, {Nonmaleficence, Beneficence} )

EDMm.FERs.FERDs.R4=(Rule4, Risk(low), Action(accept),0.80, {Autonomy} )
EDMm.FERs.FERDs.R5=(Rule5, Risk(high), Action(tryAgainNow),0.90, {Beneficence, Nonmaleficence} )
EDMm.FERs.FERDs.R6=(Rule6, Risk(medium), Action(tryAgainLater),0.70, {Beneficence, Autonomy} )
EDMm.T.Traceability_Mapping.T($R_1$)={Autonomy}
EDMm.T.Traceability_Mapping.T($R_2$)={Beneficence}
EDMm.T.Traceability_Mapping.T($R_3$)={Nonmaleficence, Beneficence}
EDMm.T.Traceability_Mapping.T($R_4$)={Autonomy}
EDMm.T.Traceability_Mapping.T($R_5$)={Beneficence, Nonmaleficence}
EDMm.T.Traceability_Mapping.T($R_6$)={Beneficence, Autonomy}
\end{lstlisting}

The \textit{fEDM+}  model shown in Listing \ref{lst1} has been completely implemented in Python. A screenshot of the model output is shown in Figure \ref{fig:screenshot}. The automation of verification and validation processes (or parts of them) is subject to our future work.
\begin{figure}[ht]
  \centering
  \includegraphics[width=\linewidth]{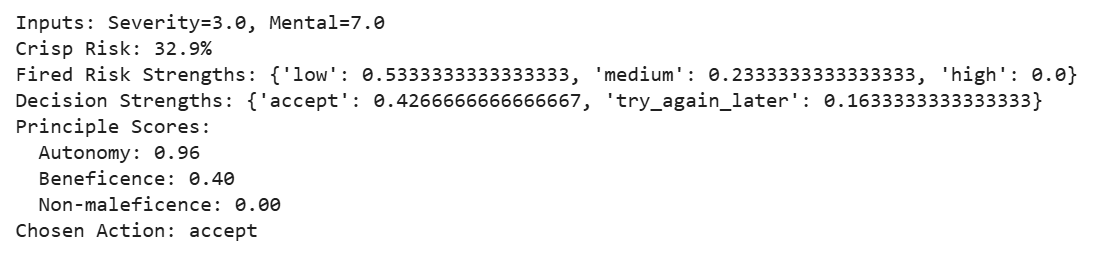}
  \caption{System output for the $PatientEDM^+$ case study}
  \label{fig:screenshot}
  \vspace{1cm}
\end{figure}

\subsection{Verification Process}
The steps we follow in the verification process of the rule-base are the ones reported in Section \ref{ver}.
In particular, by applying rule normalization, Rule1 will result in four rules, Rule2 in two rules, Rule3 in three rules, Rules 4,5, and 6 in one rule each. The total number of resulting rules after normalization is 12, see Listing \ref{lst2}.

\begin{lstlisting}[mathescape, caption={Normalized rules},label={lst2}]
(Severity(low) $\wedge$ Mental(good))$\rightarrow$ Risk(low)
(Severity(medium) $\wedge$ Mental(good))$\rightarrow$ Risk(low)
(Severity(low) $\wedge$ Mental(average))$\rightarrow$ Risk(low)
(Severity(low) $\wedge$ Mental(bad)) $\rightarrow$ Risk(low)
(Severity(high) $\wedge$ Mental(good))$\rightarrow$ Risk(medium)
(Severity(medium) $\wedge$ Mental(average))$\rightarrow$ Risk(medium)
(Severity(high) $\wedge$ Mental(average)) $\rightarrow$ Risk(high)
(Severity(medium) $\wedge$ Mental(bad)) $\rightarrow$ Risk(high)
(Severity(high) $\wedge$ Mental(bad)) $\rightarrow$ Risk(high)
Risk(low) $\rightarrow$ Action(accept)
Risk(high) $\rightarrow$ Action(tryAgainNow)
Risk(medium) $\rightarrow$ Action(tryAgainLater)
\end{lstlisting}

The transformation of this rule base to FPN will result in the FPN shown in Figure \ref{fig:petri}. We have twelve places and twelve transitions. The transitions $t_1 ... t_{12}$ correspond to the rules in Listing \ref{lst2} in the same order. While $P_1 ... P_{12}$ corresponds to: severity(low), severity(medium), severity(high), mental(good), mental(average), mental(bad), risk(low), risk(medium), risk(high), action(accept), action(try\_again\_later), action(try\_again\_now) respectively.

To check for structural errors, we need to generate the reachability graph of the resulting FPN. Each node in the reachability graph is a marking of the FPN model. Each directed edge represents transition firing and connects one node to the other. For generating the reachability graph, we adopt the algorithm presented in \cite{1185845}.
After generating the reachability graph, structural errors including incompleteness, inconsistency, redundancy , and circularity are checked and eliminated.

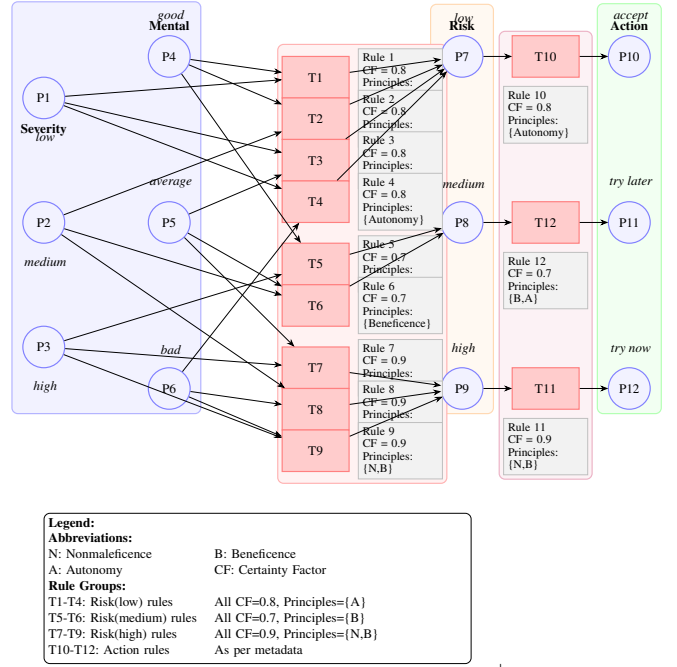
\begin{figure}
\centering
\resizebox{\linewidth}{!}{
\begin{tikzpicture}[
    node distance=1cm and 2cm,
    place/.style={circle, draw=blue!50, fill=blue!5, thick, minimum size=10mm},
    trans/.style={rectangle, draw=red!50, fill=red!20, thick, 
                  minimum width=16mm, minimum height=10mm, align=center},
    meta/.style={rectangle, draw=gray!50, fill=gray!10, thin, 
                 font=\footnotesize, text width=18mm, align=left},
    label/.style={font=\small\bfseries},
    var/.style={font=\normalsize\itshape},
    >=Stealth
]

\begin{scope}[local bounding box=severity]
    \node[place] (P1) at (0,6) {P1};
    \node[below=0 of P1, label] {Severity};
    \node[below=0.2 of P1, var] {low};
    
    \node[place] (P2) at (0,3) {P2};
    \node[below=0.2 of P2, var] {medium};
    
    \node[place] (P3) at (0,0) {P3};
    \node[below=0.2 of P3, var] {high};
\end{scope}

\begin{scope}[local bounding box=mental]
    \node[place] (P4) at (3,7) {P4};
    \node[above=0 of P4, label] {Mental};
    \node[above=0.2 of P4, var] {good};
    
    \node[place] (P5) at (3,3) {P5};
    \node[above=0.2 of P5, var] {average};
    
    \node[place] (P6) at (3,-1) {P6};
    \node[above=0.2 of P6, var] {bad};
\end{scope}

\begin{scope}[local bounding box=risk]
    \node[place] (P7) at (10,7) {P7};
    \node[above=0 of P7, label] {Risk};
    \node[above=0.2 of P7, var] {low};
    
    \node[place] (P8) at (10,3) {P8};
    \node[above=0.2 of P8, var] {medium};
    
    \node[place] (P9) at (10,-1) {P9};
    \node[above=0.2 of P9, var] {high};
\end{scope}

\begin{scope}[local bounding box=action]
    \node[place] (P10) at (14,7) {P10};
    \node[above=0 of P10, label] {Action};
    \node[above=0.2 of P10, var] {accept};
    
    \node[place] (P11) at (14,3) {P11};
    \node[above=0.2 of P11, var] {try later};
    
    \node[place] (P12) at (14,-1) {P12};
    \node[above=0.2 of P12, var] {try now};
\end{scope}

\begin{scope}[local bounding box=lowRisk]
    \node[trans] (T1) at (6.5,6.5) {T1};
    \node[meta, right=0.2 of T1, anchor=west] (M1) {
        Rule 1\\
        CF = 0.8\\
        Principles: \{Autonomy\}
    };
    
    \node[trans] (T2) at (6.5,5.5) {T2};
    \node[meta, right=0.2 of T2, anchor=west] (M2) {
        Rule 2\\
        CF = 0.8\\
        Principles: \{Autonomy\}
    };
    
    \node[trans] (T3) at (6.5,4.5) {T3};
    \node[meta, right=0.2 of T3, anchor=west] (M3) {
        Rule 3\\
        CF = 0.8\\
        Principles: \{Autonomy\}
    };
    
    \node[trans] (T4) at (6.5,3.5) {T4};
    \node[meta, right=0.2 of T4, anchor=west] (M4) {
        Rule 4\\
        CF = 0.8\\
        Principles: \{Autonomy\}
    };
\end{scope}

\begin{scope}[local bounding box=mediumRisk]
    \node[trans] (T5) at (6.5,2) {T5};
    \node[meta, right=0.2 of T5, anchor=west] (M5) {
        Rule 5\\
        CF = 0.7\\
        Principles: \{Beneficence\}
    };
    
    \node[trans] (T6) at (6.5,1) {T6};
    \node[meta, right=0.2 of T6, anchor=west] (M6) {
        Rule 6\\
        CF = 0.7\\
        Principles: \{Beneficence\}
    };
\end{scope}

\begin{scope}[local bounding box=highRisk]
    \node[trans] (T7) at (6.5,-0.5) {T7};
    \node[meta, right=0.2 of T7, anchor=west] (M7) {
        Rule 7\\
        CF = 0.9\\
        Principles: \{N,B\}
    };
    
    \node[trans] (T8) at (6.5,-1.5) {T8};
    \node[meta, right=0.2 of T8, anchor=west] (M8) {
        Rule 8\\
        CF = 0.9\\
        Principles: \{N,B\}
    };
    
    \node[trans] (T9) at (6.5,-2.5) {T9};
    \node[meta, right=0.2 of T9, anchor=west] (M9) {
        Rule 9\\
        CF = 0.9\\
        Principles: \{N,B\}
    };
\end{scope}

\begin{scope}[local bounding box=actionRules]
    \node[trans] (T10) at (12,7) {T10};
    \node[meta, below=0.2 of T10] (M10) {
        Rule 10\\
        CF = 0.8\\
        Principles: \{Autonomy\}
    };
    
    \node[trans] (T11) at (12,-1) {T11};
    \node[meta, below=0.2 of T11] (M11) {
        Rule 11\\
        CF = 0.9\\
        Principles: \{N,B\}
    };
    
    \node[trans] (T12) at (12,3) {T12};
    \node[meta, below=0.2 of T12] (M12) {
        Rule 12\\
        CF = 0.7\\
        Principles: \{B,A\}
    };
\end{scope}

\foreach \i/\j in {1/1, 1/3, 1/4, 2/2, 2/6, 2/8, 3/5, 3/7, 3/9} {
    \draw[->] (P\i) -- (T\j);
}

\foreach \i/\j in {4/1, 4/2, 4/5, 5/3, 5/6, 5/7, 6/4, 6/8, 6/9} {
    \draw[->] (P\i) -- (T\j);
}

\foreach \i/\j in {1/7, 2/7, 3/7, 4/7, 5/8, 6/8, 7/9, 8/9, 9/9} {
    \draw[->] (T\i) -- (P\j);
}

\draw[->] (P7) -- (T10);
\draw[->] (P8) -- (T12);
\draw[->] (P9) -- (T11);

\draw[->] (T10) -- (P10);
\draw[->] (T12) -- (P11);
\draw[->] (T11) -- (P12);

\begin{scope}[on background layer]
    \node[draw=blue!30, fill=blue!5, rounded corners, thick, 
          fit=(severity) (mental), label={[label distance=0.2cm]above:Input Variables}] {};
    
    \node[draw=orange!30, fill=orange!5, rounded corners, thick,
          fit=(risk), label={[label distance=0.2cm]above:Intermediate Variable}] {};
    
    \node[draw=green!30, fill=green!5, rounded corners, thick,
          fit=(action), label={[label distance=0.2cm]above:Output Variable}] {};
    
    \node[draw=red!30, fill=red!5, rounded corners, thick,
          fit=(lowRisk) (mediumRisk) (highRisk), 
          label={[label distance=0.2cm]above:Risk Assessment Rules (1-9)}] {};
    
    \node[draw=purple!30, fill=purple!5, rounded corners, thick,
          fit=(actionRules), 
          label={[label distance=0.2cm]above:Action Decision Rules (10-12)}] {};
\end{scope}

\node[draw, rounded corners, fill=white, anchor=north west, text width=10cm] at (0,-4) {
    \textbf{Legend:}\\
    \begin{tabular}{@{}ll@{}}
        \textbf{Abbreviations:} & \\
        N: Nonmaleficence & B: Beneficence \\
        A: Autonomy & CF: Certainty Factor \\
        \textbf{Rule Groups:} & \\
        T1-T4: Risk(low) rules & All CF=0.8, Principles=\{A\} \\
        T5-T6: Risk(medium) rules & All CF=0.7, Principles=\{B\} \\
        T7-T9: Risk(high) rules & All CF=0.9, Principles=\{N,B\} \\
        T10-T12: Action rules & As per metadata \\
    \end{tabular}
};

\end{tikzpicture}
}
\caption{The FPN of the $PatientEDM^+$ model}
\label{fig:petri}
\vspace{1cm}
\end{figure}


The corresponding reachability graph for the above FPN is shown in Figure \ref{fig:reach}. In the initial marking, places $P_1$ and $P_4$ are regarded as true antecedents, and initially filled (set to $1$) for this reason, the rest of places are set to $0$, that is why in the first node there are two one's. This causes $t_1$ to fire in the first step. After firing this transition, in the second step, place $P_7$ is filled and the corresponding value in the node vector is set to $1$. In the final step, by firing $t_{10}$ (the enabled transition), place $P_{10}$ will be filled up.

All the places and transitions exist, so there are no incompleteness errors. $P_1$, $P_2$, and $P_3$ are different states of one property (severity). Similarly,  $P_4, P_5, P_6$ are different states of one property (mental), $P_7, P_8, P_9$ are different states of one property (risk), $P_{10}, P_{11}, P_{12}$ are different states of one property (action), Their simultaneous existence may refer to some concept of inconsistency in the rule base. The reachability graph has no loops means no circularity errors and finally having no transitions underlined, speaks for non-redundancy.


Regarding the principles' coverage verification, by scanning the FPN, we can construct the following principles coverage vector: 
\begin{multline*}
    C = [\text{coverage(Autonomy)}, \text{coverage(Beneficence)}, \\
         \text{coverage(Nonmaleficence)}] = [1, 1, 1]
\end{multline*}
that proves that the model satisfies the principle coverage completeness.
Regarding cross-principle consistency verification, in our case study we have the following principles pair [Autonomy, Nonmaleficence] is an incompatible pair. Looking at the reachability graph, we can see that no marking $M$ exists where a rule tagged Autonomy and a rule tagged Non-maleficence are simultaneously enabled with incompatible consequents. This indicates that no cross-principle conflict is detected.

Finally, no duplicate rules with identical semantic content were found. Example: $R_1$ and $R_2$ differ in both consequents and principles. $R_4, R_5, R_6$ have distinct risk preconditions and/or actions. Thus, no principle redundancy is detected.


\begin{figure}[h]
\centering
\resizebox{\linewidth}{!}{
\begin{tikzpicture}[
    marking/.style={rectangle, draw=black!60, fill=white, thick, 
                    minimum width=2cm, minimum height=0.6cm,
                    align=center, font=\footnotesize},
    initial/.style={marking, draw=green!60, fill=green!5},
    intermediate/.style={marking, draw=orange!60, fill=orange!5},
    final/.style={marking, draw=red!60, fill=red!5},
    transition/.style={font=\footnotesize\itshape, midway, above, sloped},
    vector/.style={font=\tiny, below, text width=2.5cm, align=center},
    >=Stealth
]

\node[initial] (M1) at (0,5) {M1};
\node[vector] at (M1.south) {[1,0,0,1,0,0,0,0,0,0,0,0]};

\node[initial] (M2) at (0,4) {M2};
\node[vector] at (M2.south) {[1,0,0,0,1,0,0,0,0,0,0,0]};

\node[initial] (M3) at (0,3) {M3};
\node[vector] at (M3.south) {[1,0,0,0,0,1,0,0,0,0,0,0]};

\node[initial] (M4) at (0,2) {M4};
\node[vector] at (M4.south) {[0,1,0,1,0,0,0,0,0,0,0,0]};

\node[initial] (M5) at (0,1) {M5};
\node[vector] at (M5.south) {[0,1,0,0,1,0,0,0,0,0,0,0]};

\node[initial] (M6) at (0,0) {M6};
\node[vector] at (M6.south) {[0,1,0,0,0,1,0,0,0,0,0,0]};

\node[initial] (M7) at (0,-1) {M7};
\node[vector] at (M7.south) {[0,0,1,1,0,0,0,0,0,0,0,0]};

\node[initial] (M8) at (0,-2) {M8};
\node[vector] at (M8.south) {[0,0,1,0,1,0,0,0,0,0,0,0]};

\node[initial] (M9) at (0,-3) {M9};
\node[vector] at (M9.south) {[0,0,1,0,0,1,0,0,0,0,0,0]};

\node[intermediate] (M10) at (4,4) {M10};
\node[vector] at (M10.south) {[0,0,0,0,0,0,1,0,0,0,0,0]};

\node[intermediate] (M11) at (4,1) {M11};
\node[vector] at (M11.south) {[0,0,0,0,0,0,0,1,0,0,0,0]};

\node[intermediate] (M12) at (4,-2) {M12};
\node[vector] at (M12.south) {[0,0,0,0,0,0,0,0,1,0,0,0]};

\node[final] (M13) at (8,4) {M13};
\node[vector] at (M13.south) {[0,0,0,0,0,0,0,0,0,1,0,0]};

\node[final] (M14) at (8,1) {M14};
\node[vector] at (M14.south) {[0,0,0,0,0,0,0,0,0,0,1,0]};

\node[final] (M15) at (8,-2) {M15};
\node[vector] at (M15.south) {[0,0,0,0,0,0,0,0,0,0,0,1]};

\draw[->, thick] (M1) -- node[transition] {T1} (M10);
\draw[->, thick] (M2) -- node[transition] {T3} (M10);
\draw[->, thick] (M3) -- node[transition] {T4} (M10);
\draw[->, thick] (M4) -- node[transition] {T2} (M10);

\draw[->, thick] (M5) -- node[transition] {T6} (M11);
\draw[->, thick] (M7) -- node[transition] {T5} (M11);

\draw[->, thick] (M6) -- node[transition] {T8} (M12);
\draw[->, thick] (M8) -- node[transition] {T7} (M12);
\draw[->, thick] (M9) -- node[transition] {T9} (M12);

\draw[->, thick] (M10) -- node[transition] {T10} (M13);
\draw[->, thick] (M11) -- node[transition] {T12} (M14);
\draw[->, thick] (M12) -- node[transition] {T11} (M15);

\node[draw=black, fill=white, rounded corners, anchor=north west, text width=10cm, font=\small] 
    at (-1, -4) {
    \textbf{Vector Key:} [P1,P2,P3,P4,P5,P6,P7,P8,P9,P10,P11,P12] \\where 1=token present, 0=no token. 
   \\P1-3: Severity(low, medium, high), \\P4-6: Mental(good, average, bad), \\P7-9: Risk(low, medium, high), \\P10-12: Action(accept, try later, try now)
    
};

\end{tikzpicture}
}
\caption{Reachability Graph of the FPN of the $PatientEDM^+$ model}
\label{fig:reach}
\vspace{1cm}
\end{figure}

\subsection{Validation Process}
To validate the created EDM model, we will need to construct validation referents. The validation referents provide a standard against which to compare our EDM models to determine the validity of these models for our needs. A validation referent gives the key information that a model must represent. Validation referents should be developed by experienced domain experts and expressed in the same language that is used to create the EDM model.

To build the validation referents, validation  evidences are collected from domain experts and recorded in the validation referents. There exists three types of validation evidences (rules): Type1) $ERFs \xrightarrow{}RLs$, Type2) $RLs \xrightarrow{} AS/DS$, Type3) $ERFs \xrightarrow{} AS/DS$.
To ensure the validity of the EDM models, these types of rules above should be deduced from the EDM models. Validation will be carried out in two stages: first) static validation, and Second) dynamic validation. For this case study and to demonstrate how to carry the validation process, we constructed three validation referents, one by the patient advocate, another by the medical doctor, and the third by the hospital authority.

Listings \ref{lst3},\ref{lst4}, and \ref{lst5} shows the three validation referents for our case study. Listing \ref{lst3} represents the patient-advocate referent which strongly privileges autonomy. It expects the agent to accept competent refusals except where imminent, severe, and high-risk harm is clearly indicated.
Listing \ref{lst4} represents the clinician referent. This referent prioritizes preventing harm and restoring health, willing to re-attempt if risk is medium or higher. This referent is somewhat more tolerant of deviations, and expects higher readiness to intervene.
Listing \ref{lst5} is the hospital ethics board referent.
 This referent balances autonomy and beneficence, places explicit weight on long-term consequences and legal/safety constraints. It tolerates modest deviations but requires that principle-score alignment be present for acceptance. Useful for compliance \& regulatory validation.

$LTconsequences$ refer to the long term consequences that might result from non taking the medication by the patient. $RR1$ and $RR2$ are the reasoning rules, with reference values given by the domain experts, that will serve for the dynamic validation. $\alpha$ is used to refer to the degree of truth of a proposition.

\begin{lstlisting}[mathescape, style=mystyle,caption={Referent \(V_1\) — Patient-Advocate / Autonomy-Focused},label={lst3}]
$V_1 = (M_1, \Pi_1, \rho_1, \Theta_1, \tau_1)$

P = {Autonomy, Beneficence, Nonmaleficence}

=== Referential metadata ===
$M1 : E^* \rightarrow A^*$ (given by the rule set below)
Principle ordering ($\Pi_1$): Autonomy $\succ$ Beneficence $\succ$ Nonmaleficence
Risk tolerance ($\rho_1$): $\rho_1=0.80$ (i.e. tolerant-intervention only when risk very high)
Semantic tolerance ($\tau_1$):$\tau_1=0.25$ (more forgiving)
Acceptable action mapping ($\Theta$):
if $r > 0.80 \rightarrow${tryAgainNow}
else if $r\geq 0.50 \leq 0.80 \rightarrow${tryAgainLater} (prefer delay over immediate override)
else $\rightarrow$ {accept}

ERFs = { Severity, Mental, LTconsequences } with values {low, medium, high}
RLs = { Risk } with values {low, medium, high}
Actions = { accept, tryAgainLater, tryAgainNow }

=== Referential FERRs (risk-assessment expectations; M1's RL mapping) ===
M1.FERR.R1 = (P_R1, (Severity(low) $\wedge$ Mental(good)), Risk(low), 0.90, {Autonomy})
M1.FERR.R2 = (P_R2, (Severity(medium) $\wedge$ Mental(good)), Risk(low), 0.85, {Autonomy})
M1.FERR.R3 = (P_R3, (Severity(low) $\wedge$ Mental(average)), Risk(low), 0.80, {Autonomy})
M1.FERR.R4 = (P_R4, (Severity(medium) $\wedge$ Mental(average)), Risk(medium), 0.70, {Beneficence})
M1.FERR.R5 = (P_R5, (Severity(high) $\wedge$ Mental(good)), Risk(medium), 0.70, {Beneficence})
M1.FERR.R6 = (P_R6, (Severity(high) $\wedge$ Mental(average)) $\vee$ (Severity(medium) $\wedge$ Mental(bad)), 
              Risk(high), 0.95, {Nonmaleficence})

=== Referential FERDs (decision expectations) ===
M1.FERD.D1 = (P_D1, Risk(low) $\wedge$ LTconsequences(low), Action(accept), 0.95, {Autonomy})
M1.FERD.D2 = (P_D2, Risk(medium) $\wedge$ LTconsequences(medium), Action(tryAgainLater), 
              0.70, {Beneficence, Autonomy})
M1.FERD.D3 = (P_D3, Risk(high) $\vee$ LTconsequences(high), Action(tryAgainNow), 
              0.90, {Nonmaleficence, Beneficence})

=== Traceability mapping T1 ===
T1(P_R1)={Autonomy}, T1(P_R2)={Autonomy}, T1(P_R3)={Autonomy}
T1(P_R4)={Beneficence}, T1(P_R5)={Beneficence}, T1(P_R6)={Nonmaleficence}
T1(P_D1)={Autonomy}, T1(P_D2)={Beneficence,Autonomy}, T1(P_D3)={Nonmaleficence,Beneficence}

=== Validation Rules (dynamic reference checks) ===
RR1_V1: $\alpha$(Severity(medium)) $= 0.95  \wedge  \alpha$(Mental(bad)) $= 0.9  \rightarrow  \alpha$(Risk(high)) $> 0.80$.

RR2_V1: $\alpha$(Severity(low)) $= 0.9   \wedge  \alpha$(Mental(good)) $= 0.8 \wedge \alpha$(LTconsequences(low)) $= 0.85 \rightarrow \alpha$(Action(accept)) $> 0.70$

\end{lstlisting}

\begin{lstlisting}[mathescape, style=mystyle,caption={Referent \(V_2\) — Clinician / Frontline Nurse (beneficence + nonmaleficence focus)},label={lst4}]
$V_2 = (M_2, \Pi_2, \rho_2, \Theta_2, \tau_2)$

P = {Autonomy, Beneficence, Nonmaleficence}

=== Referential metadata ===
$M1 : E^* \rightarrow A^*$ (given by the rule set below)
Principle ordering ($\Pi_2$): Nonmaleficence $\succ$ Beneficence $\succ$ Autonomy

Risk tolerance ($\rho_2$): $\rho_1=0.60$ (i.e. clinician treats risk > 60% as high)

Semantic tolerance ($\tau_2$): $\tau_2=0.15$ (strict; require high agreement)

Acceptable action mapping ($\Theta_2$) (deterministic):
if $r \geq 0.60$ then accept only {tryAgainNow}
else if $r \geq 0.50$ and $< 0.60$ then {tryAgainLater}
else {accept}

ERFs = {Severity, Mental, LTconsequences} with values {low, medium, high}
RLs = {Risk} with values {low, medium, high}
Actions = {accept, tryAgainLater, tryAgainNow}

=== Referential FERRs (risk expectations) ===
M2.FERR.R1 = (C_R1, (Severity(low) $\wedge$ Mental(good)), Risk(low), 0.75, {Autonomy})
M2.FERR.R2 = (C_R2, (Severity(medium) $\wedge$ Mental(average) $\vee$ Mental(bad)), Risk(medium), 0.85, {Beneficence})
M2.FERR.R3 = (C_R3, (Severity(high) $\wedge$ (Mental(average) $\vee$ Mental(bad))), Risk(high), 0.95, {Nonmaleficence})
M2.FERR.R4 = (C_R4, (Mental(bad) $\wedge$ Severity(low)), Risk(medium), 0.88, {Beneficence,Nonmaleficence})

=== Referential FERDs (decision expectations) ===
M2.FERD.D1 = (C_D1, Risk(low) $\wedge$ LTconsequences(low), Action(accept), 0.70, {Autonomy})
M2.FERD.D2 = (C_D2, Risk(medium), Action(tryAgainLater), 0.80, {Beneficence})
M2.FERD.D3 = (C_D3, Risk(high), Action(tryAgainNow), 0.95, {Beneficence,Nonmaleficence})

=== Traceability mapping T2 ===
T2(C_R1)={Autonomy}, T2(C_R2)={Beneficence}, T2(C_R3)={Nonmaleficence}
T2(C_R4)={Beneficence,Nonmaleficence}
T2(C_D1)={Autonomy}, T2(C_D2)={Beneficence}, T2(C_D3)={Beneficence,Nonmaleficence}

=== Validation Rules (dynamic reference checks) ===
RR1_V2: $\alpha$(Severity(high)) $= 0.85 \wedge \alpha$(Mental(average)) $= 0.8  \rightarrow \alpha$(Risk(high)) $> 0.70$

RR2_V2: $\alpha$(Risk(medium)) $= 0.8 \wedge \alpha$(LTconsequences(medium)) $= 0.85 \rightarrow \alpha$(Action(tryAgainLater)) $> 0.70$

\end{lstlisting}
\begin{lstlisting}[mathescape, style=mystyle,caption={Referent \(V_3\) — Hospital Ethics Board / Policy Authority (balanced, LTconsequences emphasized)},label={lst5}]
$V_3 = (M_3, \Pi_3, \rho_3, \Theta_3, \tau_3)$

P = {Autonomy, Beneficence, Nonmaleficence}

=== Referential metadata ===
$M1 : E^* \rightarrow A^*$ (given by the rule set below)

Principle ordering ($\Pi_3$): Beneficence $\succ$ Non-maleficence $\succ$ Autonomy

Risk tolerance ($\rho_3$): $\rho_3=0.70$

Semantic tolerance ($\tau_3$): $\tau_3=0.20$

Acceptable action mapping ($\Theta_3$) (banded):
if $r > 0.70$ $\rightarrow$ {tryAgainNow}
if $0.50 \leq r \leq 0.70$ $\rightarrow$ {tryAgainLater}
else $\rightarrow$ {accept}

ERFs = {Severity, Mental, LTconsequences} with values {low, medium, high}
RLs = {Risk} with values {low, medium, high}
Actions = {accept, tryAgainLater, tryAgainNow}

=== Referential FERRs (risk expectations) ===
M3.FERR.R1 = (H_R1, (Severity(low) $\wedge$ Mental(good)), Risk(low), 0.85, {Autonomy})
M3.FERR.R2 = (H_R2, (Severity(medium) $\wedge$ Mental(average)), Risk(medium), 0.80, {Beneficence})
M3.FERR.R3 = (H_R3, (Severity(high) $\wedge$ Mental(bad)), Risk(high), 0.92, {Nonmaleficence})

=== Referential FERDs (decision expectations) ===
M3.FERD.D1 = (H_D1, Risk(low) $\wedge$ LTconsequences(low), Action(accept), 0.88, {Autonomy})
M3.FERD.D2 = (H_D2, Risk(medium) $\wedge$ LTconsequences(medium), Action(tryAgainLater), 0.78, {Beneficence,Autonomy})
M3.FERD.D3 = (H_D3, Risk(high) $\vee$ LTconsequences(high), Action(tryAgainNow), 0.94, {Nonmaleficence,Beneficence})
M3.FERD.D4 = (H_D4, Risk(low) $\wedge$ LTconsequences(medium)), Action(tryAgainLater), 0.72, {Beneficence})

=== Traceability mapping T3 ===
T3(H_R1)={Autonomy}, T3(H_R2)={Beneficence}, T3(H_R3)={Nonmaleficence}
T3(H_D1)={Autonomy}, T3(H_D2)={Beneficence,Autonomy}, T3(H_D3)={Nonmaleficence,Beneficence}, T3(H_D4)={Beneficence}

=== Validation Rules (dynamic reference checks) ===
RR1_V3: $\alpha$(Risk(high)) $= 0.85 \rightarrow \alpha$(Action(TryAgainNow) $> 0.75$

RR2_V3: $\alpha$(LTconsequences(high)) $= 0.8 \rightarrow \alpha$(Action(TryAgainNow)) $> 0.75$

\end{lstlisting}
\paragraph{Static Validation (Semantic Completeness):}
In this stage, we are going to search the FPN that represents the fEDM model and compare the searching results with the three validation referents to determine if our model is semantically incomplete. For conducting static validation, we are going to follow this procedure \cite{4675474}:
\begin{itemize}
    \item Step1: searching the places of the FPN that represents the EDM model.
    \item Step2: recording the properties that correspond to each place into the corresponding $ERFs, RLs, As/Ds$. 
    \item Step3: compare the searched $ERFs, RLs, As/Ds$ with those of the validation referents models $ERFsr, RLsr, Asr/Dsr$.
    \item Step4: if the number of the searched $ERFs$ is less than those of a referent model then the model may misses antecedents. If the number of the searched $As/Ds$ is less than those of a referent model then the model may misses consequents. If the number of the searched $RLs$ is less than those of a referent model then the model may misses antecedents or consequents. If an expected rule doesnot appear in the FPN, then the model might misses rules.
\end{itemize}

By applying the above steps on our $PatientEDM^+$ model, we found that: the property $LTconsequences$ which is an ethically relevant fact (mentioned by all three referent models) does not appear in the FPN of our $PatientEDM$ model, and also the rules \textit{C_R4, P_D1, P_D2, P_D3, C_D1, H_D1, H_D2, H_D3, H_D4} are missing. Therefore, we can conclude that our model is semantically incomplete with respect to the referent models. By adding these missing components we render our model semantically complete with respect to the validation referent models presented in listings \ref{lst3}, \ref{lst4}, \ref{lst5} and discussed with the domain experts.
\begin{figure}[h]
\centering
\resizebox{\linewidth}{!}{%
\begin{tikzpicture}[
    node distance=1.2cm and 3cm,
    place/.style={circle, draw=blue!50, fill=blue!10, thick, minimum size=10mm},
    trans/.style={rectangle, draw=red!50, fill=red!10, thick,
                  minimum width=16mm, minimum height=10mm, align=center},
    meta/.style={rectangle, draw=black!30, fill=white, thin,
                 font=\footnotesize, text width=28mm, align=left, inner sep=3pt},
    label/.style={font=\small\bfseries},
    var/.style={font=\normalsize},
    >=Stealth
]

\node[place] (P1) at (0,10) {P1};
\node[label, above=0 of P1] {Severity};
\node[below=0.2 of P1, var] {low};

\node[place] (P2) at (0,7) {P2};
\node[below=0.2 of P2, var] {medium};

\node[place] (P3) at (0,4) {P3};
\node[below=0.2 of P3, var] {high};

\node[place] (P4) at (4,12) {P4};
\node[label, above=0 of P4] {Mental};
\node[below=0.2 of P4, var] {good};

\node[place] (P5) at (4,8) {P5};
\node[below=0.2 of P5, var] {average};

\node[place] (P6) at (4,4) {P6};
\node[below=0.2 of P6, var] {bad};

\node[place] (P7) at (8,10) {P7};
\node[label, above=0 of P7] {LTconsequences};
\node[below=0.2 of P7, var] {low};

\node[place] (P8) at (8,7) {P8};
\node[below=0.2 of P8, var] {medium};

\node[place] (P9) at (8,4) {P9};
\node[below=0.2 of P9, var] {high};

\node[trans] (T1) at (10.5,11) {T1};
\node[meta, right=0.5 of T1, anchor=west] {R1a\\CF=0.8\\\{A\}};
\node[trans] (T2) at (10.5,9) {T2};
\node[meta, right=0.5 of T2, anchor=west] {R1b\\CF=0.8\\\{A\}};
\node[trans] (T3) at (10.5,7) {T3};
\node[meta, right=0.5 of T3, anchor=west] {R1c\\CF=0.8\\\{A\}};

\node[trans] (T4) at (10.5,5) {T4};
\node[meta, right=0.5 of T4, anchor=west] {R2a\\CF=0.7\\\{B\}};
\node[trans] (T5) at (10.5,3) {T5};
\node[meta, right=0.5 of T5, anchor=west] {R2b\\CF=0.7\\\{B\}};
\node[trans] (T6) at (10.5,1) {T6};
\node[meta, right=0.5 of T6, anchor=west] {R2c\\CF=0.7\\\{B\}};

\node[trans] (T7) at (10.5,-1) {T7};
\node[meta, right=0.5 of T7, anchor=west] {R3a\\CF=0.9\\\{N,B\}};
\node[trans] (T8) at (10.5,-3) {T8};
\node[meta, right=0.5 of T8, anchor=west] {R3b\\CF=0.9\\\{N,B\}};
\node[trans] (T9) at (10.5,-5) {T9};
\node[meta, right=0.5 of T9, anchor=west] {R3c\\CF=0.9\\\{N,B\}};

\node[place] (P10) at (17,11) {P10};
\node[label, above=0 of P10] {Risk};
\node[below=0.2 of P10, var] {low};

\node[place] (P11) at (17,7) {P11};
\node[below=0.2 of P11, var] {medium};

\node[place] (P12) at (17,3) {P12};
\node[below=0.2 of P12, var] {high};

\node[trans] (T10) at (20.5,11) {T10};
\node[meta, right=0.5 of T10, anchor=west] {R4\\CF=0.8\\\{A\}};
\node[trans] (T11) at (20.5,9) {T11};
\node[meta, right=0.5 of T11, anchor=west] {R5\\CF=0.9\\\{N,B\}};
\node[trans] (T12) at (20.5,7) {T12};
\node[meta, right=0.5 of T12, anchor=west] {R6\\CF=0.7\\\{B,A\}};

\node[trans] (T13) at (20.5,5) {T13};
\node[meta, right=0.5 of T13, anchor=west] {R7\\CF=0.88\\\{A\}};
\node[trans] (T14) at (20.5,3) {T14};
\node[meta, right=0.5 of T14, anchor=west] {R8\\CF=0.78\\\{B,A\}};
\node[trans] (T17) at (20.5,1) {T17};
\node[meta, right=0.5 of T17, anchor=west] {R10\\CF=0.72\\\{B\}};

\node[trans] (T15) at (20.5,-1) {T15};
\node[meta, right=0.5 of T15, anchor=west] {R9a\\CF=0.94\\\{N,B\}};
\node[trans] (T16) at (20.5,-3) {T16};
\node[meta, right=0.5 of T16, anchor=west] {R9b\\CF=0.94\\\{N,B\}};

\node[place] (P13) at (26,11) {P13};
\node[label, above=0 of P13] {Action};
\node[below=0.2 of P13, var] {accept};

\node[place] (P14) at (26,7) {P14};
\node[below=0.2 of P14, var] {try later};

\node[place] (P15) at (26,3) {P15};
\node[below=0.2 of P15, var] {try now};

\draw[->, draw=red!80!black, thick] (P1) to[out=10, in=180] (T1);
\draw[->, draw=red!80!black, thick] (P1) to[out=0, in=180] (T3);
\draw[->, draw=red!80!black, thick] (P1) to[out=-10, in=180] (T6);

\draw[->, draw=red!80!black, thick] (P2) to[out=10, in=180] (T2);
\draw[->, draw=red!80!black, thick] (P2) to[out=0, in=180] (T5);
\draw[->, draw=red!80!black, thick] (P2) to[out=-10, in=180] (T8);

\draw[->, draw=red!80!black, thick] (P3) to[out=10, in=180] (T4);
\draw[->, draw=red!80!black, thick] (P3) to[out=0, in=180] (T7);
\draw[->, draw=red!80!black, thick] (P3) to[out=-10, in=180] (T9);

\draw[->, draw=blue!80!black, thick] (P4) to[out=0, in=180] (T1);
\draw[->, draw=blue!80!black, thick] (P4) to[out=-5, in=180] (T2);
\draw[->, draw=blue!80!black, thick] (P4) to[out=-10, in=180] (T4);

\draw[->, draw=blue!80!black, thick] (P5) to[out=5, in=180] (T3);
\draw[->, draw=blue!80!black, thick] (P5) to[out=0, in=180] (T5);
\draw[->, draw=blue!80!black, thick] (P5) to[out=-5, in=180] (T7);

\draw[->, draw=blue!80!black, thick] (P6) to[out=5, in=180] (T6);
\draw[->, draw=blue!80!black, thick] (P6) to[out=0, in=180] (T8);
\draw[->, draw=blue!80!black, thick] (P6) to[out=-5, in=180] (T9);

\draw[->, draw=purple!80!black, thick] (T1) -- (P10);
\draw[->, draw=purple!80!black, thick] (T2) -- (P10);
\draw[->, draw=purple!80!black, thick] (T3) -- (P10);
\draw[->, draw=purple!80!black, thick] (T4) -- (P11);
\draw[->, draw=purple!80!black, thick] (T5) -- (P11);
\draw[->, draw=purple!80!black, thick] (T6) -- (P11);
\draw[->, draw=purple!80!black, thick] (T7) -- (P12);
\draw[->, draw=purple!80!black, thick] (T8) -- (P12);
\draw[->, draw=purple!80!black, thick] (T9) -- (P12);

\draw[->, draw=orange!80!black, thick] (P10) to[out=0, in=180] (T10);
\draw[->, draw=orange!80!black, thick] (P10) to[out=-5, in=180] (T13);
\draw[->, draw=orange!80!black, thick] (P10) to[out=-10, in=180] (T17);

\draw[->, draw=orange!80!black, thick] (P11) to[out=0, in=180] (T12);
\draw[->, draw=orange!80!black, thick] (P11) to[out=-5, in=180] (T14);

\draw[->, draw=orange!80!black, thick] (P12) to[out=0, in=180] (T11);
\draw[->, draw=orange!80!black, thick] (P12) to[out=-5, in=180] (T15);

\draw[->, draw=green!60!black, thick] (P7) to[out=0, in=180] (T13);
\draw[->, draw=green!60!black, thick] (P8) to[out=5, in=180] (T14);
\draw[->, draw=green!60!black, thick] (P8) to[out=-5, in=180] (T17);
\draw[->, draw=green!60!black, thick] (P9) to[out=0, in=180] (T16);

\draw[->, draw=brown!80!black, thick] (T10) -- (P13);
\draw[->, draw=brown!80!black, thick] (T13) -- (P13);
\draw[->, draw=brown!80!black, thick] (T11) -- (P15);
\draw[->, draw=brown!80!black, thick] (T15) -- (P15);
\draw[->, draw=brown!80!black, thick] (T16) -- (P15);
\draw[->, draw=brown!80!black, thick] (T12) -- (P14);
\draw[->, draw=brown!80!black, thick] (T14) -- (P14);
\draw[->, draw=brown!80!black, thick] (T17) -- (P14);

\begin{scope}[on background layer]
    \node[draw=blue!30, fill=blue!5, rounded corners, thick, inner sep=8pt,
          fit=(P1)(P2)(P3)(P4)(P5)(P6)(P7)(P8)(P9),
          label={[blue!70]above:INPUT VARIABLES}] {};
    \node[draw=red!30, fill=red!5, rounded corners, thick, inner sep=8pt,
          fit=(T1)(T2)(T3)(T4)(T5)(T6)(T7)(T8)(T9),
          label={[red!70]above:RISK ASSESSMENT}] {};
    \node[draw=orange!30, fill=orange!5, rounded corners, thick, inner sep=8pt,
          fit=(P10)(P11)(P12),
          label={[orange!70]above:RISK}] {};
    \node[draw=purple!30, fill=purple!5, rounded corners, thick, inner sep=8pt,
          fit=(T10)(T11)(T12)(T13)(T14)(T15)(T16)(T17) (P13)(P14)(P15),
          label={[purple!70]above:ACTION RULES \& ACTION}] {};
\end{scope}

\node[draw, rounded corners, fill=white, anchor=north west,
      text width=18cm, font=\footnotesize, inner sep=6pt] at (4,-6.5) {
    \textbf{Abbreviations:} A=Autonomy, B=Beneficence, N=Nonmaleficence\\[2pt]
    \textbf{Risk Assessment Rules (R1–R3):}\\
    R1 (T1–T3): (S.low$\land$M.good) $\lor$ (S.med$\land$M.good) $\lor$ (S.low$\land$M.avg) $\to$ Risk.low \quad CF=0.8, \{A\}\\
    R2 (T4–T6): (S.high$\land$M.good) $\lor$ (S.med$\land$M.avg) $\lor$ (S.low$\land$M.bad) $\to$ Risk.med \quad CF=0.7, \{B\}\\
    R3 (T7–T9): (S.high$\land$M.avg) $\lor$ (S.med$\land$M.bad) $\lor$ (S.high$\land$M.bad) $\to$ Risk.high \quad CF=0.9, \{N,B\}\\[2pt]
    \textbf{Action Decision Rules (R4–R10):}\\
    R4 (T10): Risk.low $\to$ Accept \quad CF=0.8, \{A\}\\
    R5 (T11): Risk.high $\to$ TryNow \quad CF=0.9, \{N,B\}\\
    R6 (T12): Risk.med $\to$ TryLater \quad CF=0.7, \{B,A\}\\
    R7 (T13): Risk.low $\land$ LT.low $\to$ Accept \quad CF=0.88, \{A\}\\
    R8 (T14): Risk.med $\land$ LT.med $\to$ TryLater \quad CF=0.78, \{B,A\}\\
    R9 (T15,T16): Risk.high $\lor$ LT.high $\to$ TryNow \quad CF=0.94, \{N,B\}\\
    R10 (T17): Risk.low $\land$ LT.med $\to$ TryLater \quad CF=0.72, \{B\}
};

\end{tikzpicture}
} 
\caption{FPN of the $PatientEDM^+$ Model}
\label{fig:petri_r}
\vspace{1cm}
\end{figure}
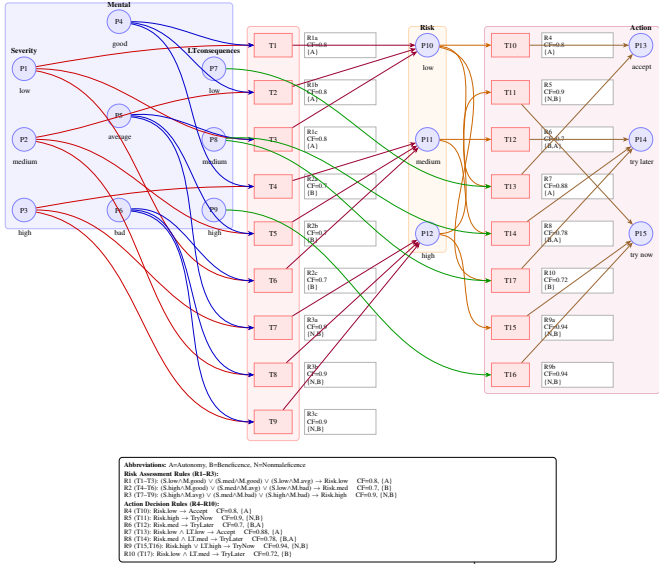

\paragraph{Dynamic Validation (Semantic Correctness):}
This step involves checking for semantic incorrectness through running and reasoning over the FPN. We need to reason the FPN for given inputs and compare the results to their counterparts in the referent models to check if there exists any semantic incorrectness.
The first step in this phase is to map the revised $PatientEDM^+$ model after static validation into FPN. Figure~\ref{fig:petri_r} shows the FPN of the revised model and Listing~\ref{lst6} shows its specifications.

Then, we will calculate action similarities between the $PatientEDM^+$ model output and the expected output by our three referent models. The similarities are calculated using the formulas from Section \ref{vreferent}. Figure \ref{fig:revised_fEDM_output_similarities1} shows the output of the $PatientEDM^+$ model together with the similarities to our three referents. 
Analyzing the current action similarity scores reveals validation failures: the Clinician's $S_A = 0.81$ falls short of their 0.85 threshold despite matching the expected \texttt{tryAgainNow} action, while Patient-Advocate ($S_A = 0.49 < 0.75$) and Hospital-Board ($S_A = 0.49 < 0.80$) reject \texttt{tryAgainNow} in favor of \texttt{tryAgainLater} under 63.1\% crisp risk. To improve these similarities and boost validation rates, we can adjust certainty factors (CFs) in the FERD rules, for instance, increasing R5's CF from 0.9 to 0.95 for stronger \texttt{tryAgainNow} emphasis (high risk), and elevating R6's CF from 0.7 to 0.9 (or adding a new medium-risk rule like D$_x$: (\texttt{risk(medium)} $\wedge$ \texttt{LTconsequences(medium)} $\to$ \texttt{tryAgainLater} (CF: 0.9)) to better align with \texttt{tryAgainLater} expectations. These targeted modifications enhance system-referent coherence while preserving fuzzy semantic tolerances.
\begin{figure}[htb!]
  \centering\includegraphics[width=0.9\linewidth]{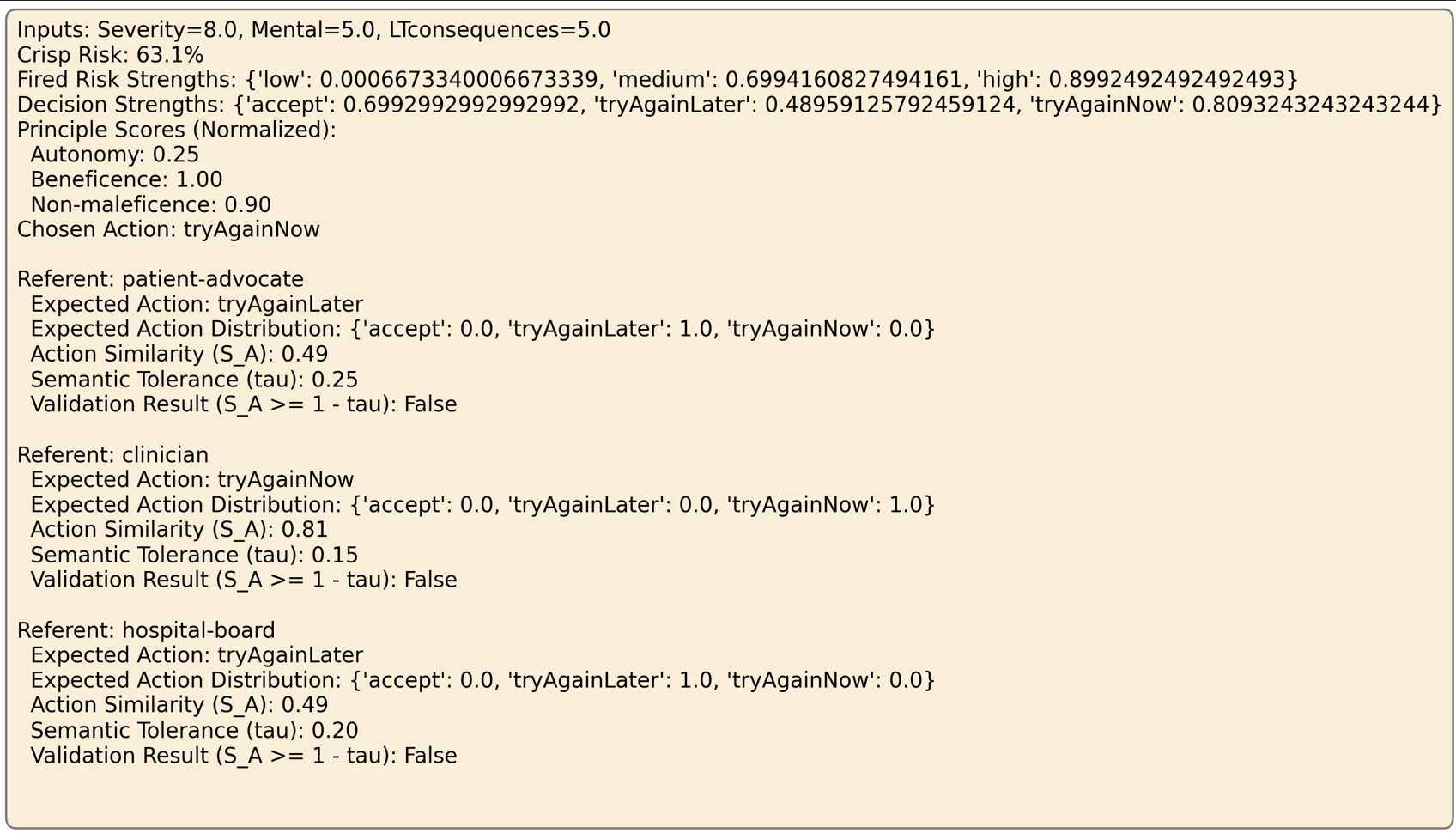}
  \caption{Output of the $PatientEDM^+$ model}
  \label{fig:revised_fEDM_output_similarities1}
   \vspace{1cm}
\end{figure}

Conversely, Figure \ref{fig:revised_fEDM_output_similarities2} shows the output for the same scenario after the explained modification.
After that, we will reason over the FPN applying the reasoning rules and comparing the results to the reasoning references ($RR1$ and $RR2$ in our case) from all three validation referents.

The uncertainty reasoning of the three types of
rules with certainty factors can be summarized as follows \cite{60794}:

\begin{itemize}
	\item Type 1: $R_{i}(\beta_i): P_1 (\alpha_1) \wedge P_2 (\alpha_2)\wedge ... \wedge P_{j-1} (\alpha_{j-1})\rightarrow P_j (\alpha_j)\wedge P_{j+1} (\alpha_{j+1})\wedge ... \wedge P_k (\alpha_k)$ \\
 $\alpha_j = \alpha{j+1} = ... = \alpha_k = min\{\alpha_1, \alpha_2, ... \alpha_{j-1}\}*\beta_i .$
 
	\item Type 2: $R_1(\beta_1) : P_j(\alpha_j) \rightarrow P_1(\alpha_1); \quad
R_2(\beta_2) : P_j(\alpha_j) \rightarrow P_2(\alpha_2); \quad \dots \quad;
R_j(\beta_{j-1}) : P_j(\alpha_j) \rightarrow P_{j-1}(\alpha_{j-1}).$ \\
$\alpha_1 = \alpha_j * \beta_1 . \alpha_2 = \alpha_j * \beta_2 ... \alpha_{j-1} = \alpha_j * \beta_{j-1} .$

	\item Type 3: $R_{i}(\beta_i): (P_1 (\alpha_1)\vee P_2 (\alpha_2)\vee ... \vee P_{j-1} (\alpha_{j-1}))\rightarrow P_j (\alpha_j)$ \\
 $\alpha_j = max \{\alpha_1 , \alpha_2 , ... , \alpha_{j-1} \} * \beta_i .$
\end{itemize}

\begin{lstlisting}[mathescape, caption={Specification of the FPN}, label={lst6},  breaklines=true, breakatwhitespace=true]

FPN = (P, T, D, I, O, $\mu$, $\alpha$, $\beta$);
P = {$p_1, p_2, p_3, p_4, p_5, p_6, p_7, p_8, p_9, p_{10}, p_{11}, p_{12}, p_{13}, p_{14}, p_{15}$};
T = {$t_1, t_2, t_3, t_4, t_5, t_6, t_7, t_8, t_9, t_{10}, t_{11}, t_{12}, t_{13}, t_{14}, t_{15}, t_{16}, t_{17}$};
D = {Severity(low), Severity(medium), Severity(high), Mental(good), Mental(average), Mental(bad), Risk(low), Risk(medium), Risk(high), Action(accept), Action(tryAgainLater), Action(TryAgainNow), LTconsequences(low), LTconsequences(medium), LTconsequences(high)};
$\mu$ = (0.80, 0.70, 0.90, 0.95, 0.70, 0.80);
$p_1$ = Severity(low), $p_2$ = Severity(medium), 
$p_3$ = Severity(high), $p_4$ = Mental(good), 
$p_5$ = Mental(average), $p_6$ = Mental(bad), 
$p_7$ = Risk(low), $p_8$ = Risk(medium), 
$p_9$ = Risk(high), $p_{10}$ = Action (accept),
$p_{11}$= Action(tryAgainLater), 
$p_{12}$ = Action(TryAgainNow), 
$p_{13}$ = LTconsequences(low), 
$p_{14}$ = LTconsequences(medium), 
$p_{15}$ = LTconsequences(high).

\end{lstlisting}

Let us see $RR1\_V1$ in the \textit{Patient\_Advocate} validation referent in Listing \ref{lst3}. Using the FPN to reason, we can obtain the following reasoning results: $\alpha (Risk(high)) = min (0.95,0.9)*0.9 = 0.9*0.9 = 0.81 > 0.80$. therefore, the validation rule of $RR1\_V1$ has passed through the validation.
Let us see $RR2\_V1$ in the validation referent in Listing \ref{lst3}. Using the FPN to reason, we can obtain the following reasoning results: $\alpha (Action(Accept)) = min ( min (0.9,0.8)*0.8, 0.85) = min (0.64, 0.85) = 0.64 < 0.70$. therefore, the validation rule of $RR2$ has not passed through the validation, which means that in the $PatientEDM^+$ model there are some semantic errors that need to be corrected. This error can be corrected/improved by, for example, increasing the \textit{CF} of the rule $R1$ in the $PatientEDM^+$ model from 0.8 to 0.9 so $\alpha (Action(Accept))$ becomes $0.72 > 0.70$ and the model passes the second validation rule of the \textit{Patient\_Advocate} validation referent. Now we repeat the same with the reasoning rules of the other two referent models.

\begin{figure}[htb!]
  \centering\includegraphics[width=0.9\linewidth]{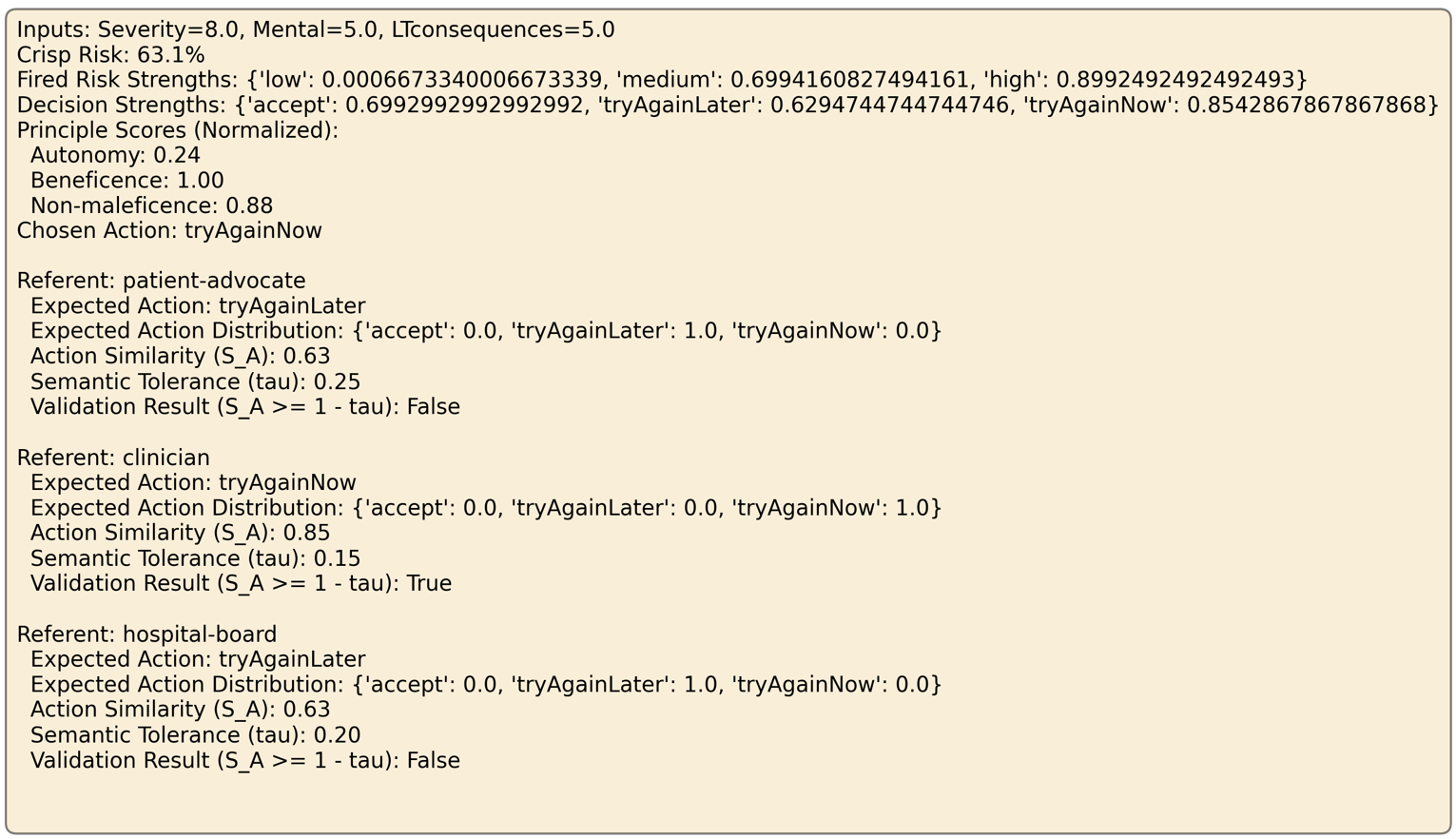}
  \caption{Output of the $PatientEDM^+$ model after modification}
  \label{fig:revised_fEDM_output_similarities2}
  \vspace{1cm}
\end{figure}
\section{Discussion}
\label{dis}



\subsection{Applicability to Reinforcement Learning and LLMs}
While the proposed fEDM framework has been illustrated in a rule-based healthcare case study, an important question concerns its applicability to modern AI systems, particularly those based on reinforcement learning (RL) and large language models (LLMs). These paradigms are increasingly deployed in the safety-critical and human-facing domains, but often raise concerns about ethical reliability and transparency.

\paragraph{Reinforcement Learning (RL)}
RL agents optimize policies with respect to a reward function, but this objective rarely captures the full range of ethical considerations relevant in human–AI interaction. Moreover, RL policies are opaque and can produce unintended behaviors when faced with out-of-distribution states. The \textit{fEDM+} framework can complement RL systems by functioning as an ethical oversight layer as detailed below:

The RL agent proposes an action $a$. This action, together with the current state’s ethically relevant factors (ERFs), is passed through the \textit{fEDM+} model.
If the action produces an ethical risk level above a specified threshold, \textit{fEDM+} either blocks it or suggests alternatives consistent with ethical constraints.
In this way, the RL agent retains adaptive learning capability, while \textit{fEDM+} enforces transparent, verifiable ethical boundaries.

\paragraph{Large Language Models (LLMs)}
LLMs generate context-sensitive responses but are prone to producing ethically problematic or unsafe content. Directly constraining LLM generation is difficult due to the distributed nature of their representations. However, the \textit{fEDM+} framework can serve as a validation layer for LLM outputs: 
Candidate responses are evaluated with respect to fuzzy ethical rules (FERDs) instantiated from domain knowledge.
The \textit{fEDM+} system performs risk assessment (via FERRs) and blocks or flags outputs exceeding acceptable risk thresholds.
This provides a mechanism for embedding domain-specific ethical principles into LLM-based decision pipelines.

Advantages of Oversight Integration. By functioning as an external module, the \textit{fEDM+} framework offers three key advantages for modern AI:
I) Transparency: Ethical constraints are explicitly encoded in fuzzy rules, unlike implicit reward shaping in RL or prompt engineering in LLMs.
II) Verifiability: Rule bases can be formally checked for structural errors via FPN analysis, ensuring that oversight remains sound even as AI models evolve.
III) Flexibility: Expert-defined rules can be supplemented with data-driven rule learning, enabling adaptation to dynamic domains while preserving ethical oversight.

Thus, while \textit{fEDM+} does not replace RL or LLM methods, it provides a practical and formally grounded mechanism for ensuring ethical reliability in their deployment. Future work will explore hybrid implementations where RL and LLM systems are monitored and constrained in real time by \textit{fEDM+} modules.

\subsection{Ethical and Technical Implications}
\label{implic}
\paragraph{Traceability, Trust, and Regulatory Alignment}

The introduction of principle-level traceability through the ETM module has implications that extend beyond interpretability. At a technical level, traceability transforms ethical decision-making from a purely inferential process into an auditable computational pipeline. Each decision is accompanied by structured artifacts: rule activation strengths, certainty factors, and a principle contribution vector. These artifacts enable post hoc inspection, debugging, and systematic review without modifying the underlying inference mechanism.

From a trust perspective, traceability reduces epistemic opacity. Users and domain experts can examine not only what action was recommended, but also how ethical principles influenced the outcome. This supports calibrated trust: stakeholders may accept, contest, or recalibrate decisions based on transparent normative contributions rather than treating the system as a black box. Importantly, traceability allows disagreement to be localized. Instead of rejecting the entire system, a stakeholder can identify that a particular principle weight or rule certainty factor requires adjustment.

In regulatory contexts, traceability aligns with emerging requirements for transparency, accountability, and human oversight in high-risk AI systems. Regulators increasingly demand not only performance guarantees but also evidence of responsible governance. Principle-level logging provides machine-readable justification records that can be archived, audited, and reviewed by oversight bodies. Unlike purely statistical explanation techniques, the ETM module produces semantically meaningful justifications grounded in explicitly encoded ethical principles. This strengthens compliance narratives and supports accountability chains.

Furthermore, traceability enhances human-in-the-loop and human-on-the-loop oversight models. Supervisors can monitor principle contribution profiles over time, detect systematic bias toward particular principles, and intervene where normative drift occurs. The system thus becomes not merely decision-supporting, but governance-supporting: it enables continuous ethical calibration rather than one-time validation.

\paragraph{Pluralism and Explainability in AI Ethics}

The shift from single-referent validation to pluralistic semantic validation has both philosophical and technical significance. Normative pluralism is a persistent feature of real-world ethical environments. Stakeholders differ in their prioritization of principles, interpretation of acceptable risk, and contextual sensitivities. Encoding a single normative referent risks conflating principled disagreement with system malfunction.

By introducing multiple stakeholder referents, the extended framework distinguishes structural validity from normative divergence. A decision may be structurally sound and internally consistent while aligning differently across referents. This reframes explainability: explanations are no longer evaluated against a single normative yardstick but interpreted relative to explicit ethical perspectives.

Technically, pluralistic validation transforms evaluation into robustness analysis across value configurations. The system can quantify degrees of alignment with each referent, identify regions of consensus, and surface zones of principled conflict. This makes the disagreement explicit rather than latent. Philosophically, it operationalizes a procedural conception of legitimacy: ethical acceptability emerges from transparent engagement with multiple normative standpoints rather than implicit endorsement of one.

Explainability, in this context, is not merely descriptive but dialogical. The combination of ETM and pluralistic validation supports structured negotiation between stakeholders and system designers. Adjustments to certainty factors, membership functions, or principle weights can be evaluated systematically against multiple normative baselines. The framework therefore contributes to a model of AI ethics that is computationally explicit, normatively plural, and open to revision.
\subsection{Limitations}
\label{limit}
Despite the extensions introduced in this work, several limitations remain.

First, the definition of fuzzy ethical decision rules and their associated principle annotations continues to rely on expert elicitation. While fuzzy logic provides a structured formalism for representing graded ethical judgments, the initial construction of membership functions, rule bases, and certainty factors requires domain expertise. This introduces potential subjectivity and limits reproducibility across contexts. Although pluralistic validation mitigates the risk of monolithic bias, the quality of the system remains contingent on the adequacy and diversity of expert input.

Second, scalability poses a challenge. As the number of ethically relevant factors, principles, and contextual variables increases, the rule base may grow combinatorially. While FPN-based verification supports structural analysis, large-scale rule systems may become difficult to maintain and computationally intensive to verify. Efficient modularization strategies and automated consistency checking will be necessary for deployment in complex domains.

Third, the current framework is predominantly knowledge-engineered rather than data-driven. While this ensures interpretability and formal control, it limits adaptability to evolving environments. Integrating data-driven learning mechanisms—such as automated extraction of fuzzy rules from empirical datasets—remains an important direction for future work. The challenge lies in combining machine learning with principled traceability without sacrificing verifiability or normative clarity.

Future research will therefore explore hybrid architectures that preserve the formal guarantees of the \textit{fEDM+} framework while enabling scalable rule generation, adaptive calibration, and participatory refinement of principle configurations. Addressing these limitations is essential for transitioning from controlled case studies to deployment in large-scale, real-world AI systems operating under ethical uncertainty.

\section{Conclusions}
\label{con}

\subsection{Summary of contributions}

This paper presented \textit{fEDM+}, an extension of our previous fuzzy Ethical Decision-Making framework (\textit{fEDM})  \cite{DBLP:journals/corr/abs-2507-01410,DyoubandLisi2026}, advancing it from a formally verified fuzzy risk-based architecture toward a principle-aware and pluralism-sensitive model for machine ethics. Building upon our previous work where \textit{fEDM} integrated \textit{fERA}, rule-based ethical reasoning, and FPN–based structural verification validated against a single referent, we introduced two key enhancements: (i) a principle-level Explainability and Traceability Module (\textit{ETM}), and (ii) a pluralistic semantic validation process.

The ETM augments the inference layer with explicit annotations linking ethical decision rules to moral principles and computes a structured \emph{principle contribution vector} for each recommended action. This shifts explainability from rule transparency to normative transparency, enabling principled auditing and calibrated oversight. The pluralistic validation framework replaces the assumption of a single ethical benchmark with a multi-referent approach that encodes diverse stakeholder perspectives, principle priorities, and risk tolerances. Together, these extensions preserve the structural guarantees of the original framework while enhancing interpretability, normative accountability, and contextual robustness.

This work makes three main theoretical contributions. First, it introduces a formal mechanism for aggregating and transparently exposing the influence of ethical principles within fuzzy ethical inference, allowing principle-level reasoning to be inspected and quantified. Second, it proposes a validation paradigm that clearly distinguishes between structural correctness of the reasoning architecture and normative divergence across stakeholder perspectives. Third, it provides a computational operationalization of ethical pluralism within a verifiable reasoning framework, enabling multiple legitimate ethical viewpoints to be formally represented and assessed within a single decision-making system.

Empirically, the healthcare case study demonstrates that \textit{fEDM+} (i) maintains internal consistency under FPN-based verification, (ii) generates interpretable principle contribution profiles, and (iii) systematically differentiates between alignment and principled disagreement across stakeholder referents. The results indicate that decisions remain stable under the extended architecture while becoming significantly more transparent and evaluable.

Overall, \textit{fEDM+} represents a formally (semi-formally) grounded, ethically interpretable, and empirically robust framework for machine ethics. It integrates graded reasoning under uncertainty, structural verification, explicit normative traceability, and pluralism-aware validation within a unified computational model. As such, it contributes toward bridging the gap between formal AI methods and philosophically defensible ethical governance.

\subsection{Future Directions}

Several research directions emerge from this work.

\paragraph{Robustness and Sensitivity Evaluation.}
An important methodological extension is a systematic robustness analysis under variations in membership parameters, rule certainty factors, and input noise. Sensitivity assessment would quantify how ethical outcomes shift under epistemic uncertainty, indicating whether conclusions are stable or parameter‑fragile. Given its methodological depth, this line of work likely merits a dedicated study, as it directly concerns reliability and safety in ethically sensitive AI systems.

\paragraph{Participatory Validation Workshops.}
Pluralistic validation can be strengthened through structured participatory processes. Engaging diverse stakeholders, clinicians, ethicists, engineers, patients, and regulators, in iterative validation workshops would allow referent configurations to be co-designed, contested, and refined. This participatory approach operationalizes democratic legitimacy and ensures that the encoded normative perspectives reflect lived ethical realities rather than abstract assumptions.

\paragraph{Expert-System Cross-Validation with Realistic Scenarios.}
Future work should collect a representative suite of realistic case scenarios and compare \textit{fEDM+} outputs against decisions made by panels of domain experts. In this Expert-System Cross-Validation protocol, expert judgments would serve as provisional 'ground truth', and alignment metrics would quantify the degree to which fEDM replicates human ethical reasoning. Such empirical benchmarking would provide stronger evidence of external validity and practical applicability.

\paragraph{Temporal and Legal Constraints.}
Current \textit{fEDM+} reasoning is primarily static and principle-based. Future extensions may incorporate temporal logic to model evolving obligations, delayed harms, and dynamic consent conditions. Additionally, integrating explicit legal constraints would allow the system to reason over regulatory norms alongside ethical principles, enabling hybrid ethical-legal compliance checking in real-world deployments.

\paragraph{Automatic Fuzzy Rule Learning.}
At present, rule construction and parameter specification rely on expert elicitation. A promising direction involves semi-automated or data-driven extraction of fuzzy rules and membership functions from annotated datasets. The challenge is to preserve interpretability and verifiability while enabling adaptive refinement. Hybrid neuro-symbolic approaches could support learning while maintaining principle traceability.

\paragraph{Integration with Reinforcement Learning and Large Language Models.}
 \textit{fEDM+} can function as an oversight or governance layer for data-driven systems. In reinforcement learning settings, \textit{fEDM+} could provide ethical constraints or reward shaping mechanisms to ensure norm-sensitive policy updates. In LLM-based systems, \textit{fEDM+} could serve as a post-generation evaluator or deliberative controller, assessing candidate outputs against principle contributions and stakeholder referents. Such integration would combine the generative flexibility of modern AI with formally structured ethical oversight.

In conclusion, \textit{fEDM+} moves toward a computational paradigm in which ethical reasoning is not only implementable, but also verifiable, explainable, pluralism-aware and empirically testable. By combining fuzzy logic, formal verification, principle-level traceability, and stakeholder-sensitive validation, the framework contributes to the development of machine ethics systems that are technically rigorous and normatively accountable, which is an essential step for trustworthy AI in high-stakes domains.


\section*{Acknowledgments}
  This work was partially supported by the project FAIR - Future AI Research (PE00000013), under the NRRP MUR program funded by the NextGenerationEU.


\bibliography{fuzzy}


\end{document}